\definecolor{citecolor}{RGB}{34,139,34}
\definecolor{mydarkblue}{rgb}{0,0.08,1}
\definecolor{mydarkgreen}{rgb}{0.02,0.6,0.02}
\definecolor{mydarkred}{rgb}{0.8,0.02,0.02}
\definecolor{mydarkorange}{rgb}{0.40,0.2,0.02}
\definecolor{mypurple}{RGB}{111,0,255}
\definecolor{myred}{rgb}{1.0,0.0,0.0}
\definecolor{mygold}{rgb}{0.75,0.6,0.12}
\definecolor{myblue}{rgb}{0,0.2,0.8}
\definecolor{mydarkgray}{rgb}{0.,0.2,0.2}
\definecolor{lightred}{RGB}{255,235,235}
\definecolor{lightgreen}{RGB}{235,255,235}
\definecolor{lightblue}{RGB}{235,235,255}
\definecolor{lightcyan}{RGB}{235,255,255}
\definecolor{lightmagenta}{RGB}{255,235,255}
\definecolor{lightyellow}{RGB}{255,255,235}
\definecolor{qxkcolor}{RGB}{215,235,255}
\definecolor{softmaxcolor}{RGB}{230,235,255}
\definecolor{probxvcolor}{RGB}{255,255,235}
\definecolor{topkcolor}{RGB}{255,235,235}
\definecolor{zecolor}{RGB}{255,255,235}
\definecolor{dynacolor}{RGB}{235,255,255}
\definecolor{reviewcolor}{RGB}{0,0,200}
\renewcommand\footnotemark{}
\newcommand{\eg}{e.g., }
\newcommand{\name}{QuantumNAT\xspace}
\newcommand{\qnn}{QNN\xspace}
\newcommand{\pqc}{PQC\xspace}
\newcommand{\torchquantum}{TorchQuantum\xspace}
\newcommand{\x}{$\times$\xspace}
\newcommand{\E}{\mathbb{E}}
\newcommand{\Var}{\mathrm{Var}}
\newcommand{\nisq}{NISQ\xspace}
\newcommand{\cnot}{\texttt{CNOT}}
\newcommand{\swap}{\texttt{SWAP}}
\newcommand{\rxgate}{\texttt{RX}\xspace}
\newcommand{\rygate}{\texttt{RY}\xspace}
\newcommand{\rzgate}{\texttt{RZ}\xspace}
\newcommand{\rxyz}{\texttt{RXYZ}}
\newcommand{\uone}{\texttt{U1}}
\newcommand{\uthree}{\texttt{U3}\xspace}
\newcommand{\cuthree}{\texttt{CU3}\xspace}
\newcommand{\hgate}{\texttt{H}}
\newcommand{\tgate}{\texttt{T}}
\newcommand{\sgate}{\texttt{S}}
\newcommand{\xx}{\texttt{XX}}
\newcommand{\zx}{\texttt{ZX}}
\newcommand{\zz}{\texttt{ZZ}}
\newcommand{\cz}{\texttt{CZ}}
\newcommand{\cx}{\texttt{CNOT}\xspace}
\newcommand{\sqrtswap}{$\sqrt{\texttt{SWAP}}$}
\newcommand{\sqrth}{$\sqrt{\texttt{H}}$}
\newcommand{\xgate}{\texttt{X}\xspace}
\newcommand{\ygate}{\texttt{Y}\xspace}
\newcommand{\zgate}{\texttt{Z}\xspace}
\newcommand{\sx}{\texttt{SX}\xspace}
\newcommand{\idgate}{\texttt{ID}\xspace}
\newcounter{rlabelno}
\newcommand{\twoclassaccimprove}{42\%\xspace}
\newcommand{\fourclassaccimprove}{43\%\xspace}
\newcommand{\tenclassaccimprove}{23\%\xspace}
\begin{document}
\settopmatter{printacmref=false} 





\pagestyle{plain}

\title{
QuantumNAT: Quantum \underline{N}oise-\underline{A}ware \underline{T}raining with \\ Noise Injection, Quantization and Normalization
}

\author{$^1$Hanrui Wang, $^2$Jiaqi Gu, $^3$Yongshan Ding, $^4$Zirui Li, $^5$Frederic T. Chong, $^2$David Z. Pan, $^1$Song Han\\
\small{
$^1$Massachusetts Institute of Technology, $^2$University of Taxes at Austin, $^3$Yale University,
$^4$Rutgers University, $^5$University of Chicago
\texttt{\url{https://qmlsys.mit.edu}}}
}

\begin{abstract}
Parameterized Quantum Circuits (PQC) are promising towards quantum advantage on near-term quantum hardware. However, due to the large quantum noises (errors), the performance of PQC models has a severe degradation on real quantum devices. Take Quantum Neural Network (\qnn) as an example, the accuracy gap between noise-free simulation and noisy results on IBMQ-Yorktown for MNIST-4 classification is over 60\%. Existing noise mitigation methods are \emph{general} ones without leveraging unique 
characteristics of \pqc; on the other hand, existing \pqc work \emph{does not} consider noise effect. To this end, we present \name, a \pqc-specific framework to perform noise-aware optimizations in both training and inference stages to improve robustness. 
We experimentally observe that the effect of quantum noise to \pqc measurement outcome is a linear map from noise-free outcome with a scaling and a shift factor.
Motivated by that, we propose \emph{post-measurement normalization} to mitigate the feature distribution differences between noise-free and noisy scenarios. Furthermore, to improve the robustness against noise, we propose \emph{noise injection} to the training process by inserting quantum error gates to \pqc according to realistic noise models of quantum hardware. Finally, \emph{post-measurement quantization} is introduced to quantize the measurement outcomes to discrete values, achieving the denoising effect. Extensive experiments on 8 classification tasks using 6 quantum devices demonstrate that \name improves accuracy by up to 43\%, and achieves over 94\% 2-class, 80\% 4-class, and 34\% 10-class classification accuracy measured on real quantum computers. The code for construction and noise-aware training of \pqc is available in the \href{https://github.com/mit-han-lab/torchquantum}{\torchquantum} library.
\end{abstract}

\maketitle

\section{Introduction}
\label{sec:introduction}
Quantum Computing (QC) is a new computational paradigm that can be exponentially faster than classical counterparts in various domains. Parameterized Quantum Circuits (PQC) are circuits containing trainable weights and are promising to achieve quantum advantages in current devices.
Among various PQCs, Quantum Neural Network (\qnn) is a popular algorithm in which a network of parameterized quantum gates are constructed and trained to embed data and perform certain ML tasks on a quantum computer, similar to the training and inference of classical neural networks.

Currently we are in the Noisy Intermediate Scale Quantum (\nisq) stage, in which quantum operations suffer from a high error rate of $10^{-2}$ to $10^{-3}$, much higher than CPUs/GPUs ($10^{-6}$ FIT). The quantum errors unfortunately introduces detrimental influence on \pqc accuracy. Figure~\ref{fig:motivation} shows the single-qubit gate error rates and the measured accuracy of classification tasks with \qnn on different hardware. Three key observations are: (1) Quantum error rates ($10^{-3}$) are much larger than classical CMOS devices' error rates ($10^{-6}$ failure per $10^{9}$ device hours). (2) Accuracy on real hardware is significantly degraded (up to 64\%) compared with noise-free simulation. (3) The same \qnn on different hardware has distinct accuracy due to different gate error rates. IBMQ-Yorktown has a five times larger error rate than IBMQ-Santiago, and higher error causes lower accuracy.

Researchers have proposed noise mitigation techniques~\citep{temme2017error, wille2019mapping} to reduce the noise impact. However, they are \emph{general methods} without considering the unique characteristics of \pqc, and can only be applied to \pqc inference stage. On the other hand, existing \pqc work~\citep{farhi2018classification, jiang2021co} does not consider the noise impact. This paper proposes a \pqc-specific noise mitigation framework called \name that optimizes \pqc robustness in \emph{both training and inference} stages, boosts the \emph{intrinsic robustness} of \pqc parameters, and improves accuracy on \emph{real quantum machines}.

\begin{figure}[t]
    \centering
    \includegraphics[width=\columnwidth]{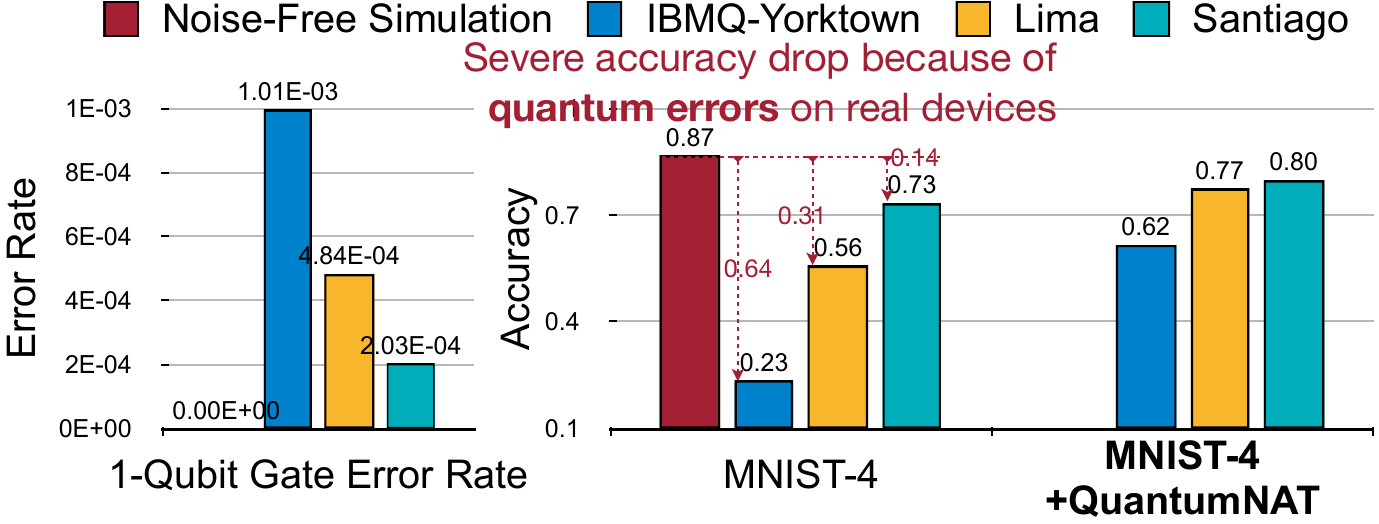}
    \vspace{-10pt}
    \caption{
    \emph{Left:} Current quantum hardware has much larger error rates (around $10^{-3}$) than classical CPUs/GPUs. \emph{Right:} Due to the errors, \pqc (\qnn) models suffer from severe accuracy drops. Different devices have various error magnitudes, leading to distinct accuracy. These motivate \name, a \emph{hardware-specific noise-aware} \pqc training approach to improve robustness and accuracy.
    }
    \vspace{-10pt}
    \label{fig:motivation}
\end{figure}

\name comprises a three-stage pipeline. The first step, \emph{post-measurement} \emph{normalization} normalizes the measurement outcomes on each quantum bit (qubit) across data samples, thus removing the quantum error-induced distribution shift. Furthermore, we inject noise to the \pqc training process by performing \emph{error gate insertion}. The error gate types and probabilities are obtained from hardware-specific realistic quantum noise models provided by QC vendors. During training, we iteratively sample error gates, insert them to \pqc, and updates weights. Finally, \emph{post-measurement quantization} is further proposed to reduce the precision of measurement outcomes from each qubit and achieve a denoising effect.

\begin{figure*}[t]
    \centering
    \includegraphics[width=0.9\textwidth]{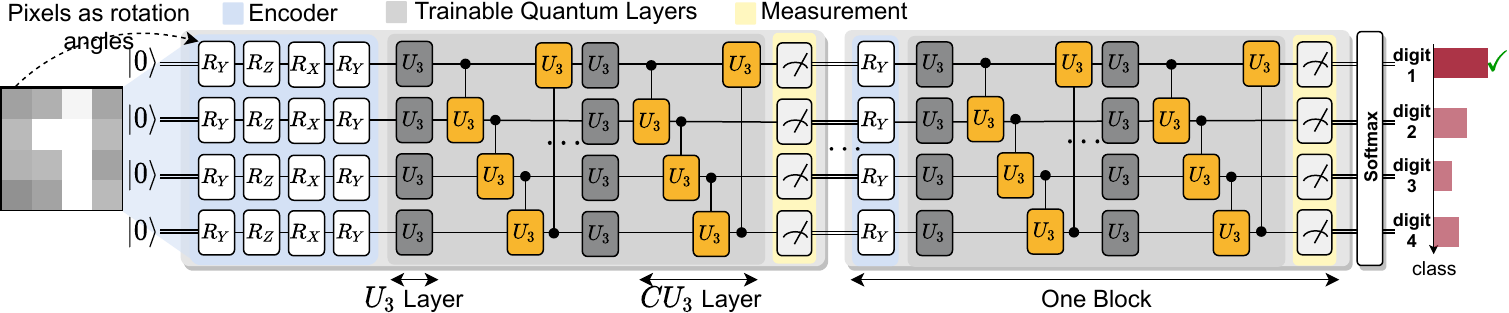}
    \vspace{-8pt}
    \caption{Quantum Neural Networks Architecture. \qnn has multiple blocks, each has an encoder to encode classical values to quantum domain, quantum layers with trainable weights, and a measurement layer that obtains classical values.}
    \vspace{-8pt}
    \label{fig:circuits}
\end{figure*}

In this paper, we are mainly using \textit{QNNs as benchmarks} but the techniques can also be applied to \textit{other PQCs}. The contributions of \name are as follows:

\begin{itemize}[leftmargin=*]
    \item A systematic pipeline to mitigate noise impact: post-measurement normalization noise injection and post-measurement quantization.
    \item Extensive experiments on 8 ML tasks with 5 different design spaces on 6 quantum devices show that \name can improve accuracy by up to \twoclassaccimprove, \fourclassaccimprove, \tenclassaccimprove for 2-class, 4-class and 10-class classification tasks and demonstrates over 94\%, 80\% and 34\% accuracy for 2-, 4-, and 10-classifications on real quantum hardware. 
    \item The code for construction and noise-aware training of \pqc is available at the \href{https://github.com/mit-han-lab/torchquantum}{\torchquantum} library. It is an convenient infrastructure to query noise models from QC providers such as IBMQ, extract noise information, perform training on CPU/GPU and finally deploy on real QC.
\end{itemize}

\section{Background and Related Work}
\label{sec:background}

\noindent\textbf{QML and QNN.}
Quantum machine learning explores performing ML tasks on quantum devices.
The path to \emph{quantum advantage} on QML is typically provided by the quantum circuit's ability to generate and estimate highly complex kernels, which would otherwise be intractable to compute with conventional computers.
They have been shown to have potential speed-up over classical counterparts in various tasks, including metric learning, data analysis, and principal component analysis. Quantum Neural Networks is one type of QML models using variational quantum circuits with trainable parameters to accomplish feature encoding of input data and perform complex-valued linear transformations thereafter. Various theoretical formulations for \qnn have been proposed such as quantum Boltzmann machine~\cite{amin2018quantum} and quantum classifier~\citep{farhi2018classification, wang2021quantumnas, wang2021exploration, wang2022onchipqnn}, etc.  

\noindent\textbf{Quantum error mitigation.}
As the error forms the bottleneck of the quantum area. Researchers have developed various error mitigation techniques~\cite{wille2019mapping}. Extrapolation methods~\citep{temme2017error} perform multiple measurements of a quantum circuit under different error rates and then extrapolate the ideal measurement outcomes when there is no noise.~\cite{liang2021can} trains \pqc using RL with noisy simulator.
\name is fundamentally different from existing methods: (i) Prior work focuses on low-level numerical correction in inference only; \name embraces more optimization freedom in both \emph{training and inference}. It improves the intrinsic robustness and statistical fidelity of \emph{\pqc parameters}. (ii) \pqc has a good built-in error-tolerance which motivates \name's post-measurement quantization to reduce the numerical precision of intermediate results while preserving accuracy. (iii) \name has a small overhead ($<$2\%), while others introduce high measurements, circuit complexity cost, etc. We show that existing extrapolation method is orthogonal to \name in Section~\ref{sec:results}.

\noindent\textbf{Quantization and noise injection of classical NN.}
To improve NN efficiency, extensive work has been explored to trim down redundant bit representation in NN weights and activations~\citep{han2015deep, wang2021spatten}.
Though low-precision quantization limits the model capacity, it can improve the generalization and robustness~\citep{lin2019defensive}.
An intuitive explanation is that quantization corrects errors by value clamping, thus avoiding cascaded error accumulation.
Moreover, by sparsifying the parameter space, quantization reduces the NN complexity as a regularization mechanism that mitigates potential overfitting issues.
Similarly, injecting noises into neural network training is demonstrated to help obtain a smoothed loss landscape for better generalization~\citep{155944}.
\section{Noise-Aware \pqc Training}
\label{sec:method}

We use \qnn as the benchmark \pqc in this work. Figure~\ref{fig:circuits} shows the \qnn architecture. The inputs are classical data such as image pixels, and the outputs are classification results. The \qnn consists of multiple blocks. Each has three components: encoder encodes the classical values to quantum states with rotation gates such as \rygate; trainable quantum layers contain parameterized gates that can be trained to perform certain ML tasks; measurement part measures each qubit and obtains a classical value. The measurement outcomes of one block are passed to the next block. For the MNIST-4 example in Figure~\ref{fig:circuits}, the first encoder takes the pixels of the down-sampled 4\x4 image as rotation angles $\theta$ of 16 rotation gates. The measurement results of the last block are passed through a Softmax to output classification probabilities. \name overview is in Figure~\ref{fig:flowchart}. 

\begin{figure*}[t]
    \centering
    \includegraphics[width=\textwidth]{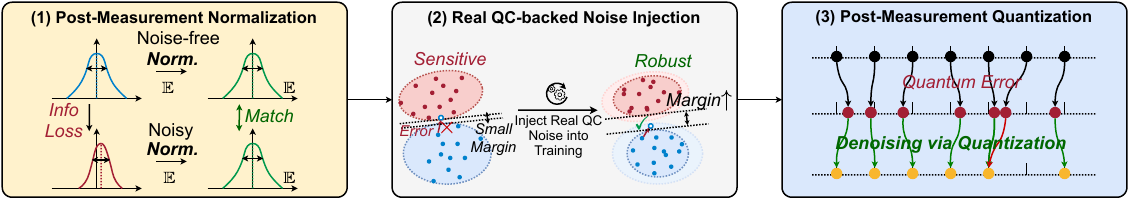}
    \caption{\name Overview. (1) Post-measurement normalization matches the distribution of measurement results between noise-free simulation and real QC. (2) Based on realistic noise models, noise-injection inserts \emph{quantum error gates} to the training process to increase the classification margin between classes. (3) Measurement outcomes are further quantized for denoising.}
    \label{fig:flowchart}
\end{figure*}

\subsection{Post-Measurement Normalization}
\label{sec:normalization}
\noindent\textbf{Measurement outcome shift due to quantum noises.}
Before delving into the noise mitigation techniques, we first show analytically how quantum noises influence the \qnn block output. 
The measurement outcomes of the \qnn are sensitive to both the input parameters and any perturbations by some noisy quantum process. 
This section provides insights on such noisy transformations and discusses their impacts on \qnn inference. 

\begin{theorem} \textnormal{(informal version)}\textbf{.} \label{th:linear_noise}
The measurement outcome $y$ of a quantum neural network for the training input data $x$ is transformed by the quantum noise that the system undergoes with a linear map $f(y_x) = \gamma y_x + \beta_x$, where the translation $\beta_x$ depends on the input $x$ and quantum noises, while scaling factor $\gamma$ is input independent.
\end{theorem}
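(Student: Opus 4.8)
The plan is to model one \qnn block as a fixed-topology circuit whose only $x$-dependence sits in the rotation angles of the encoder: write the ideal unitary as $U_x = W\,E(x)$, where $E(x)$ is the data encoder and $W$ collects the (input-independent) trainable layers, and let the $i$-th measurement outcome be $y_x^{(i)}=\langle 0|U_x^\dagger Z_i U_x|0\rangle$. The realistic hardware noise is represented, exactly as in \name's noise-injection procedure, by a CPTP error channel $\mathcal{N}_k$ attached after the $k$-th elementary gate (plus a read-out channel before measurement), where the error rates are supplied by the vendor noise model and depend on the \emph{gate location}, not on the rotation angle. The full noisy evolution is then $\Phi_x=\mathcal{N}_{L}\circ\mathcal{U}_{L}\circ\cdots\circ\mathcal{N}_{1}\circ\mathcal{U}_{1}$, with each $\mathcal{U}_k$ a unitary channel, some of which carry the $x$-dependence.

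The key step is an identity-plus-error expansion of the noise. For stochastic Pauli (depolarizing-type) noise one writes $\mathcal{N}_k=(1-p_k)\,\mathrm{id}+p_k\,\mathcal{R}_k$, with $p_k$ the total error probability at location $k$ and $\mathcal{R}_k$ the trace-preserving, completely positive conditional error channel. Substituting this into $\Phi_x$ and multiplying out, exactly one term contains no error channel at all; it equals $\gamma\,\mathcal{U}_x^{\mathrm{ideal}}$ with $\gamma:=\prod_{k}(1-p_k)$, while every remaining term carries at least one factor $\mathcal{R}_k$. Taking the expectation of $Z_i$ in $\Phi_x(|0\rangle\langle0|)$ therefore gives $y_x^{(i),\mathrm{noisy}}=\gamma\,y_x^{(i)}+\beta_x^{(i)}$, where $\beta_x^{(i)}=\mathrm{Tr}\!\big[\,(\Phi_x-\gamma\,\mathcal{U}_x^{\mathrm{ideal}})(|0\rangle\langle0|)\,Z_i\,\big]$ collects the faulty branches. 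Folding in read-out error only multiplies $\gamma$ by the input-independent factor $1-e_{0\to1}^{(i)}-e_{1\to0}^{(i)}$ and adds the input-independent bias $e_{1\to0}^{(i)}-e_{0\to1}^{(i)}$ into $\beta_x^{(i)}$.

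It then remains to read off the two claimed properties. The scaling $\gamma=\prod_k(1-p_k)$ is built only from hardware error rates of a circuit whose topology is the same for every data point, hence is manifestly independent of $x$ --- it is the circuit's ``survival probability.'' The shift $\beta_x^{(i)}$ is a weighted combination of $\langle Z_i\rangle$ over the faulty branches, and each faulty branch applies error operators that do not commute through the $x$-dependent encoder $E(x)$, so $\beta_x^{(i)}$ generically varies with $x$ (and with the noise through the $p_k$ and $\mathcal{R}_k$). I would make the non-commutation point concrete with the one-error-location case: a bit-flip $X$ inserted between $E(x)$ and $W$ leaves a residual $p\,\langle0|E(x)^\dagger X W^\dagger Z_i W X E(x)|0\rangle$ that plainly depends on $x$, whereas the coefficient of $y_x^{(i)}$ is just $1-p$.

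The main obstacle is being honest about the noise model. The clean linear law with an $x$-independent $\gamma$ uses two hypotheses: (i) the dominant noise is stochastic/Pauli-like plus read-out, so that the split $\mathcal{N}_k=(1-p_k)\mathrm{id}+p_k\mathcal{R}_k$ has a physical $\mathcal{R}_k$ and the leftover term is a genuine noiseless copy of the circuit --- coherent over-rotations or parameter-dependent error rates would let the effective $\gamma$ itself drift with $x$; and (ii) the circuit topology (hence the set of $p_k$) is fixed across inputs. These are the simplifications behind the ``informal'' label, and I would state them as assumptions rather than try to remove them. Two smaller points to get right: $\gamma$ is really one factor per output coordinate, so the stated scalar should be read qubit-wise (or as $\mathrm{diag}(\gamma_1,\dots)$); and one should only claim that $\beta_x$ \emph{may} depend on $x$, since in degenerate circuits the residual term can collapse to a constant.
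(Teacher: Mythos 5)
Your proof is correct under the assumptions you state, but it takes a genuinely different route from the paper's. The paper lumps all of the hardware noise into a single, arbitrary Kraus channel $\mathcal{E}(\rho)=\sum_k O_k\rho O_k^\dagger$ acting on the \emph{output} state $\rho(\boldsymbol{x},\boldsymbol{\theta})$ of the \qnn, passes to the Heisenberg picture via $\Omega=\sum_k O_k^\dagger Z O_k$, and expands the single-qubit state in the Pauli basis; this yields $\gamma=\frac{1}{2}\mathrm{tr}(Z\Omega)$ and $\beta_\rho=\frac{1}{2}\mathrm{tr}(X\Omega)\,\mathrm{tr}(X\rho)+\frac{1}{2}\mathrm{tr}(Y\Omega)\,\mathrm{tr}(Y\rho)$, so the input dependence of the shift is identified explicitly with the $X$ and $Y$ coherences of the state, and the input independence of $\gamma$ rests on the stated assumption that the $O_k$ carry no dependence on $\boldsymbol{x}$. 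You instead keep the noise where it physically occurs --- one stochastic Pauli channel per gate location plus readout error --- and extract the linear law from the identity-plus-error expansion, giving $\gamma=\prod_k(1-p_k)$ as a survival probability and $\beta_x$ as the sum over faulty branches. Each route buys something: the paper's covers arbitrary (non-Pauli) channels and gives a sharper structural description of $\beta$, but it silently requires that commuting all the interleaved gate errors to the end of the circuit produces Kraus operators that do not inherit $x$-dependence from the data encoder --- precisely the point your one-error example shows is delicate; your version is restricted to stochastic Pauli plus readout noise, but it matches the noise-injection model the paper actually implements, makes the $x$-independence of $\gamma$ manifest from the fixed circuit topology, and is more candid about its hypotheses (no coherent errors, angle-independent error rates). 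One further observation in your favor: the paper's step $\mathrm{tr}(\Omega)=0$ implicitly uses unitality of the channel ($\sum_k O_kO_k^\dagger=I$); for non-unital noise such as amplitude damping this trace is nonzero, though it would only add an input-independent constant to $\beta$, so the theorem survives --- your decomposition avoids the issue entirely by construction.
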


We refer to Appendix Section \ref{proof:linear_noise} for background and a complete proof. The main theoretical contribution of this theorem equips our proposed normalization methodology with robustness guarantees. Most importantly, we observe that the changes in measurement results can often be compensated by proper post-measurement normalization across input batches. For simplicity, we restrict our analysis on $Z$-basis single-qubit measurement outcome $y$. Similar analytical results for multi-qubit general-basis measurement will follow if we apply the same analysis qubit by qubit. Theorem \ref{th:linear_noise} is most powerful when applied on a small batch of input data $\mathbf{x} = \{x_1, \dotsc, x_m\}$ where each $x_i$ is a set of classical input values for the encoder of the QNN and $m$ is the size of the batch. In an ideal noiseless scenario, the QNN model outputs measurement result $y_i$ for each input $x_i$. For a noisy QNN, the measurement result undergoes a composition of two transformations: (1) a constant scaling by $\gamma$; (2) a input-specific shift by $\beta_i$, i.e., $f(y_i) = \gamma y_i + \beta_i$. In the realistic noise regime, the scaling constant $\gamma \in[-1,1]$. However, for small noises, $\gamma$ is close to 1, and $\beta_i$ is close to 0. Therefore, the distribution of noisy measurement results undergoes a constant scaling by $\gamma \leq 1$ and a small shift by each $\beta_i$. In the small-batch regime when $\boldsymbol{\beta} = \{\beta_1, \dotsc, \beta_m\}$ has small variance, the distribution is shifted by its mean $\beta=\E[\boldsymbol{\beta}]$. Thus $f(y_i) \approx \gamma y_i + \beta$.

\noindent\textbf{Post-measurement normalization}.
Based on the analysis above, we propose \emph{post-measurement normalization} to offset the distribution scaling and shift. 
For each qubit, we collect its measurement results on a batch of input samples, compute their mean and std., then make the distribution of each qubit across the batch \emph{zero-centered} and of \emph{unit variance}. This is performed during both training and inference. 
During training, for a batch of measurement results: $\boldsymbol{y} = \{y_1, \dotsc, y_m\}$, the normalized results are $\hat{ y_{i}}=(y_{i}-\E[\boldsymbol{y}])/\sqrt{\Var(\boldsymbol{y})}$. 
For noisy inference, we correct the error as $\widehat{f(y_{i})}=(f(y_{i})-\E[f(\boldsymbol{y})])/\sqrt{\Var(f(\boldsymbol{y}))} = ((\gamma y_{i} + \beta) - (\gamma \E[\boldsymbol{y}] + \beta)) / \sqrt{\gamma^2\Var(\boldsymbol{y})} = \hat{y_{i}}$.

Figure~\ref{fig:norm} compares the noise-free measurement result distribution of 4 qubits (blue) with their noisy counterparts (yellow) for MNIST-4. Qualitatively, we can clearly observe that the post-measurement normalization reduces the mismatch between two distributions. 
Quantitatively, we adopt signal-to-noise ratio, $SNR = \|\bm{A}\|_{2}^2/\|\bm{A}-\widetilde{\bm{A}}\|_{2}^2$, the inverse of relative matrix distance (RMD), as the metric. The SNR on each qubit and each individual measurement outcome is clearly improved. Though similar, it is different from Batch Normalization~\citep{ioffe2015batch} as the testing batch uses its own statistics instead of that from training, and there is no trainable affine parameter.

\begin{figure}[t]
    \centering
    \includegraphics[width=\columnwidth]{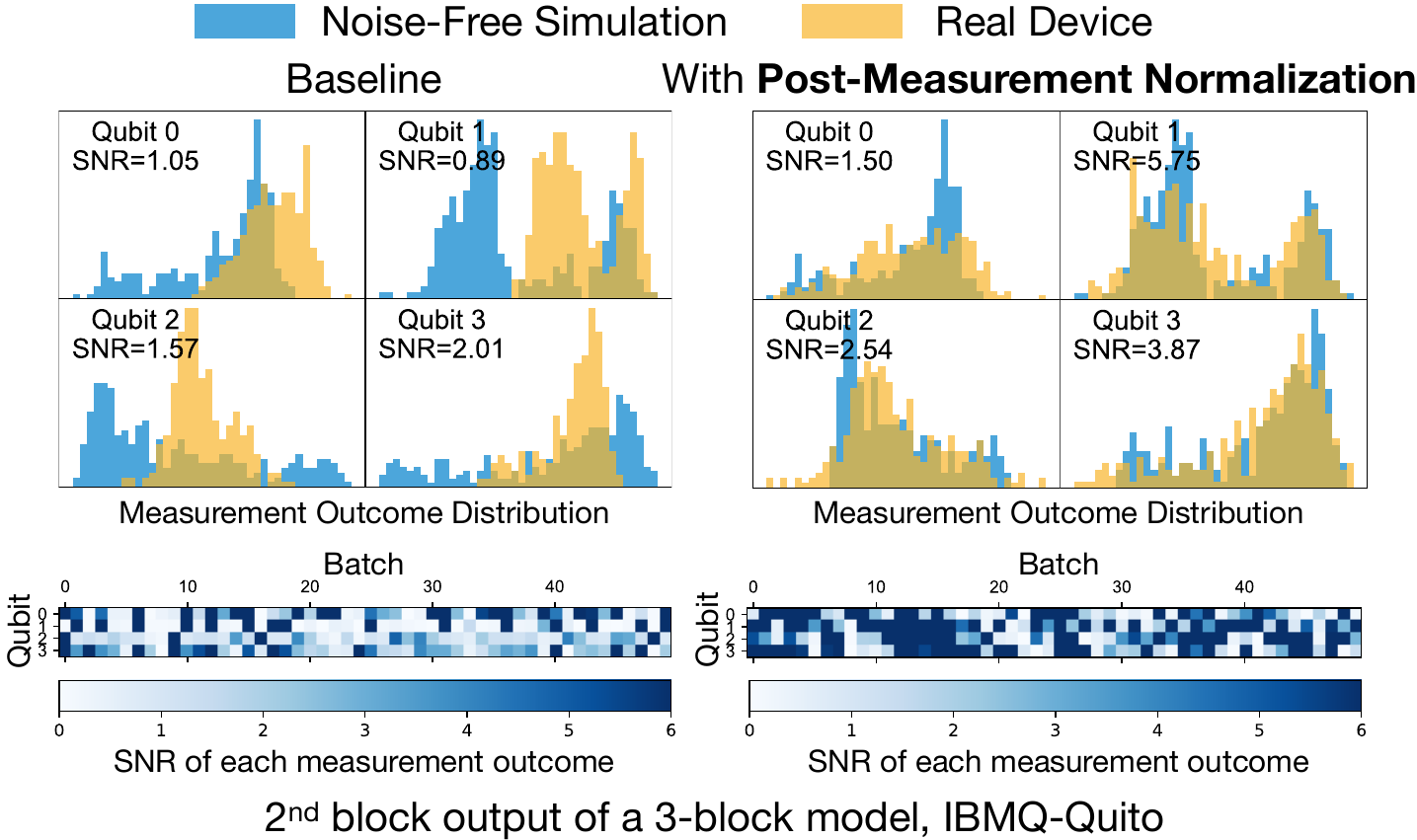}
    \caption{Post-measurement normalization reduces the distribution mismatch between noise-free simulation and noisy results, thus improving the Signal-to-Noise Ratio (SNR).}
    \vspace{-10pt}
    \label{fig:norm}
\end{figure}

\subsection{Quantum Noise Injection}
\label{sec:error_injection}

Although the normalization above mitigates error impacts, we can still observe small discrepancies on each individual measurement outcome, which degrade the accuracy. Therefore, to make the \qnn model robust to those errors, we propose noise injection to the training process. 


\begin{figure*}[t]
    \centering
    \includegraphics[width=\textwidth]{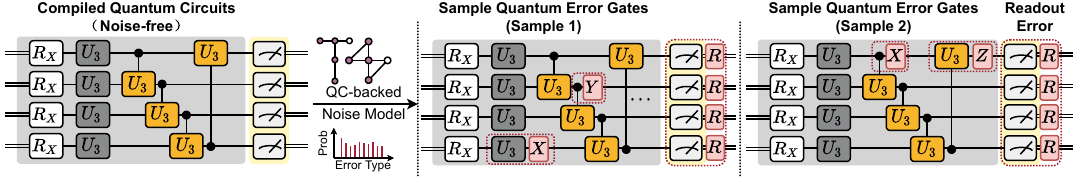}
    \vspace{-20pt}
    \caption{Noise injection via error gate insertion. \xgate, \ygate, \zgate are sampled Pauli error gates. \texttt{R} is the injected readout error. Probabilities for gate insertion are obtained from real device noise models.}
    \label{fig:gateinsertion}
    \vspace{-10pt}
\end{figure*}

\noindent\textbf{Quantum error gate insertion.} As introduced in Section~\ref{sec:background}, different quantum errors can be approximated by Pauli errors via Pauli Twirling. The effect of Pauli errors is the random insertion of Pauli \xgate, \ygate, and \zgate gates to the model with a probability distribution $\mathcal{E}$. How to compute $\mathcal{E}$ is out of the scope of this work. 
But fortunately, we can directly obtain it from the realistic device noise model provided by quantum hardware manufacturers such as IBMQ. The noise model specifies the probability $\mathcal{E}$ for different gates on each qubit. For single-qubit gates, the error gates are inserted \emph{after} the original gate. For two-qubit gates, error gates are inserted after the gate on \emph{one or both} qubits. For example, the \sx gate on qubit 1 on IBMQ-Yorktown device has $\mathcal{E}$ as \{\xgate: 0.00096, \ygate: 0.00096, \zgate: 0.00096, None: 0.99712\}. When `None' is sampled, we will not insert any gate. The same gate on different qubits or different hardware will have up 10\x probability difference.
As in Figure~\ref{fig:gateinsertion}, during training, for each \qnn gate, we sample error gates based on $\mathcal{E}$ and insert it after the original gate. A new set of error gates is sampled for each training step. In reality, the \qnn is compiled to the basis gate set of the quantum hardware (\eg \xgate, \cx, \rzgate, \cx, and \idgate) before performing gate insertion and training. We will also scale the probability distribution by a constant \emph{noise factor} $T$ and scale the \xgate, \ygate, \zgate probability by $T$ during sampling. $T$ factor explores the trade-off between adequate noise injection and training stability. Typical $T$ values are in the range of [0.5, 1.5]. The gate insertion overhead is typically less than 2\%.

\noindent\textbf{Readout noise injection.}
Obtaining classical values from qubits is referred as readout/measurement, which is also error-prone. The realistic noise model provides the statistical readout error in the form of a 2\x2 matrix for each qubit. For example, the qubit 0 of IBMQ-Santiago has readout error matrix [[0.984, 0.016], [0.022, 0.978]] which means the probability of measuring a $\ket{0}$ as 0 is 0.984 and as 1 is 0.016. We emulate the readout error effect during training by changing the measurement outcome. For instance, originally $P(0)=0.3, P(1)=0.7$, the noise injected version will be $P'(0) = 0.3 \times 0.984 + 0.7 \times 0.022=0.31, P'(1) = 0.7 \times 0.978 + 0.3 \times 0.016=0.69$.

\noindent\textbf{Direct perturbation.}
Besides gate insertion, we also experimented with directly perturbing measurement outcomes or rotation angles as noise sources. For outcome perturbation, with benchmarking samples from the validation set, we obtain the error $Err$ distribution between the noise-free and noisy measurement results and compute the mean $\mu_{Err}$ and std $\sigma_{Err}$. During training, we directly add noise with Gaussian distribution $\mathcal{N}(\mu_{Err}, \sigma_{Err}^2)$ to the normalized measurement outcomes. Similarly, for rotation angle perturbation, we add Gaussian noise to the angles of all rotation gates in \qnn and make the effect of rotation angle Gaussian noise on measurement outcomes similar to real QC noise.  We show in Section~\ref{sec:results} that the gate insertion method is better than direct perturbations.

\subsection{Post-Measurement Quantization}
\label{sec:quantization}
\begin{figure}[t]
    \centering
    \includegraphics[width=\columnwidth]{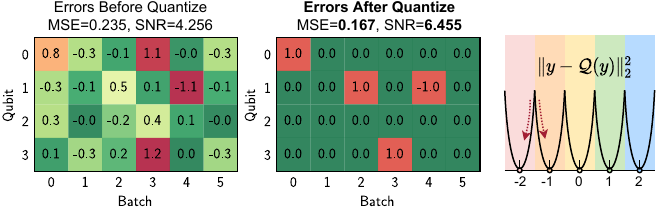}
    \caption{\emph{Left}: Error maps before and after post-measurement quantization. Most errors can be corrected.
    \emph{Right}: 5-level quantization buckets with a quadratic penalty loss.
    }
    \label{fig:measurement_quantization}
    \vspace{-15pt}
\end{figure}
Finally, we propose post-measurement quantization on the normalized results to further denoise the measurement outcomes. We first clip the outcomes to $[p_{min}, p_{max}]$, where $p$ are pre-defined thresholds, and then perform uniform quantization. The quantized values are later passed to the next block's encoder. Figure~\ref{fig:measurement_quantization} shows one real example from Fashion-4 on IBMQ-Santiago with five quantization levels and $p_{min}=-2, p_{max}=2$. The left/middle matrices show the error maps between noise-free and noisy outcomes before/after quantization. Most errors can be corrected back to zero with few exceptions of being quantized to a wrong centroid. The MSE is reduced from 0.235 to 0.167, and the SNR is increased from 4.256 to 6.455. We also add a loss term $||y-Q(y)||_2^{2}$ to the training loss, as shown on the right side, to encourage outcomes to be near to the quantization centroids to improve error tolerance and reduce the chance of being quantized to a wrong centroid. Besides improving robustness, quantization also reduces the control complexity of rotation gates.

\section{Experiments}
\label{sec:results}
\subsection{Experiment Setups}
\label{sec:experiment_setups}
\noindent\textbf{Datasets.} We conduct experiments on 8 classification tasks including MNIST~\citep{726791} 10-class, 4-class (\texttt{0, 1, 2, 3}). and 2-class (\texttt{3, 6}); Vowel 4-class (\texttt{hid, hId, had, hOd}); Fashion~\citep{xiao2017fashion} 10-class, 4-class (\texttt{t-shirt/top, trouser, pullover, dress}), and 2-class (\texttt{dress, shirt}), and CIFAR~\citep{cifar10} 2-class (\texttt{frog, ship}). MNIST, Fashion, and CIFAR use 95\% images in `train' split as training set and 5\% as the validation set. Due to the limited real QC resources, we use the first 300 images of `test' split as test set. Vowel-4 dataset (990 samples) is separated to train:validation:test = 6:1:3 and test with the whole test set. MNIST and Fashion images are center-cropped to $24\times24$; and then down-sample to 4$\times$4 for 2- and 4-class, and 6$\times$6 for 10-class; CIFAR images are converted to grayscale, center-cropped to 28\x28, and down-sampled to 4\x4. All down-samplings are performed with average pooling. For vowel-4, we perform feature principal component analysis (PCA) and take 10 most significant dimensions.

\noindent\textbf{\qnn models.} 
\qnn models for 2 and 4-class use 4 qubits; 10-class uses 10.
The first quantum block's encoder embeds images and vowel features.
For $4\times4$ images, we use 4 qubits and 4 layers with 4 \rygate, 4 \rxgate, 4 \rzgate, and 4 \rygate~gates in each layer, respectively.
There are in total 16 gates to encode the 16 classical values as the rotation angles.
For $6\times6$ images, 10 qubits and 4 layers are used with 10 \rygate, 10 \rxgate, 10 \rzgate, and 6 \rygate~gates in each layer, respectively.
10 vowel features, uses 4 qubits and 3 layers with 4 \rygate, 4 \rxgate,~and 2 \rzgate~gates on each layer for encoding.
For trainable quantum layers, we use \uthree and \cuthree layers interleaved as in Figure~\ref{fig:circuits} except for Table~\ref{tab:design_space}. 
For measurement, we measure the expectation values on Pauli-\texttt{Z} basis and obtain a value [-1, 1] from each qubit. The measurement outcome goes through post-measurement normalization and quantization and is used as rotation angles for \rygate gates in the next block's encoder.
After the last block, for two-classifications, we sum the qubit 0 and 1, 2 and 3 measurement outcomes, respectively, and use Softmax to get probabilities. 
For 4 and 10-class, Softmax is directly applied to measurement outcomes. 


\noindent\textbf{Quantum hardware and compiler configurations.}.
\label{sec:compute_resources}
We use IBMQ quantum computers via Qiskit~\citep{ibmq} APIs. We study 6 devices, with \#qubits from 5 to 15 and Quantum Volume from 8 to 32. We also employ Qiskit for compilation. The optimization level is set to 2 for all experiments. All experiments run 8192 shots. The noise models we used are off-the-shelf ones updated by IBMQ team.

\subsection{Main Results}

\noindent\textbf{QNN results.} We experiment with four different \qnn architectures on 8 tasks running on 5 quantum devices to demonstrate \name's effectiveness. For each benchmark, we experiment with noise factor $T=\{0.1, 0.5, 1, 1.5\}$ and quantization level among \{3, 4, 5, 6\} and select one out of 16 combinations with the lowest loss on the \emph{validation set} and test on the \emph{test set}. 
Normalization and quantization are not applied to the last block's measurement outcomes as they are directly used for classification. 
As in Table~\ref{tab:main}, \name consistently achieves the highest accuracy on 26 benchmarks. The third bars of Athens are unavailable due to its retirement. On average, normalization, noise injection and quantization improve accuracy by 10\%, 9\%, and 3\%, respectively. A larger model does not necessarily have higher accuracy. For example, Athens' model is 7.5\x larger than Yorktown with higher noise-free accuracy. However, due to more errors introduced by the larger model, the real accuracy is lower.

\begin{table}[t]
\centering
\renewcommand*{\arraystretch}{0.8}
\setlength{\tabcolsep}{1pt}

\caption{\name consistently achieves the highest accuracy, with on average 22\% better. `B' for Block, `L' for Layer. }
\label{tab:main}
\resizebox{1.08\columnwidth}{!}{%
\begin{tabular}{llcccccc}
\toprule
Model & Method & MNIST-4 & Fash.-4 & Vow.-4 & MNIST-2 & Fash.-2 & Cifar-2 \\
\midrule
\multirow{4}{*}{\shortstack[l]{2B$\times$12L\\Santiago}} & Baseline & 0.30 & 0.32 & 0.28 & 0.84 & 0.78 & 0.51 \\ 
& + Post Norm. & 0.41 & 0.61 & 0.29 & 0.87 & 0.68 & 0.56 \\
& + Gate Insert. & 0.61 & 0.70 & 0.44 & 0.93 & 0.86 & 0.57 \\
& + Post Quant. & \textbf{0.68} & \textbf{0.75} & \textbf{0.48} & \textbf{0.94} & \textbf{0.88} & \textbf{0.59} \\
\midrule
\multirow{4}{*}{\shortstack[l]{2B$\times$2L\\Yorktown}} & Baseline & 0.43 & 0.56 & 0.25 & 0.68 & 0.70 & 0.52 \\ 
& + Post Norm. & 0.57 & 0.60 & 0.38 & 0.86 & 0.72 & 0.56 \\ 
& + Gate Insert. & 0.58 & 0.60 & \textbf{0.45} & 0.91 & 0.85 & 0.57 \\ 
& + Post Quant. & \textbf{0.62} & \textbf{0.65} & 0.44 & \textbf{0.93} & \textbf{0.86} & \textbf{0.60} \\ 
\midrule
\multirow{4}{*}{\shortstack[l]{2B$\times$6L\\Belem}} & Baseline & 0.28 & 0.26 & 0.20 & 0.46 & 0.52 & 0.50 \\ 
& + Post Norm. & 0.52 & 0.57 & 0.33 & 0.81 & 0.62 & 0.51 \\ 
& + Gate Insert.  & 0.52 & 0.60 & 0.37 & 0.84 & \textbf{0.82} & 0.57 \\ 
& + Post Quant. & \textbf{0.58} & \textbf{0.62} & \textbf{0.41} & \textbf{0.88} & 0.80 & \textbf{0.61} \\ 
\midrule
\multirow{4}{*}{\shortstack[l]{3B$\times$10L\\Athens}} & Baseline & 0.29 & 0.36 & 0.21 & 0.54 & 0.46 & 0.49 \\ 
& + Post Norm. & 0.44 & 0.46 & 0.37 & 0.51 & 0.51 & 0.50 \\ 
& + Gate Insert.  & - & - & - & - & - & - \\ 
& + Post Quant. & \textbf{0.56} & \textbf{0.64} & \textbf{0.41} & \textbf{0.87} & \textbf{0.64} & \textbf{0.53} \\ 
\midrule
\midrule
Model & Method & MNIST-10 & Fash.-10 & Avg.-All \\
\midrule
\multirow{4}{*}{\shortstack[l]{2B$\times$2L\\Melbo.}} & Baseline & 0.11 & 0.09 & 0.42 \\
& + Post Norm. & 0.08 & 0.12 & 0.52 \\
& + Gate Insert.  & 0.25 & 0.24 &0.61 \\
& + Post Quant. & \textbf{0.34} & \textbf{0.31} & \textbf{0.64} \\
\bottomrule
\end{tabular}%
}
\end{table}
\begin{table}[t]
\centering
\renewcommand*{\arraystretch}{0.8}
\setlength{\tabcolsep}{5pt}
\caption{Accuracy on different design spaces.}
\vspace{-8pt}
\label{tab:design_space}
\begin{tabular}{lcccc}
\toprule
\multirow{2}{*}{Design Space} & \multicolumn{2}{c}{MNIST-4}    & \multicolumn{2}{c}{Fashion-2}  \\ 
                  & Yorktown      & Santiago      & Yorktown      & Santiago      \\ \midrule
`\zz+\rygate'              & \textbf{0.43} & 0.57          & 0.80          & \textbf{0.91} \\
\textbf{+\name}              & 0.34          & \textbf{0.60} & \textbf{0.83} & 0.86          \\ \midrule
`\rxyz'            & 0.57          & 0.61          & 0.88          & 0.89          \\
\textbf{+\name}              & \textbf{0.61} & \textbf{0.70} & \textbf{0.92} & \textbf{0.91} \\ \midrule
`\zx+\xx'             & 0.29          & 0.51          & \textbf{0.52} & 0.61          \\
\textbf{+\name}              & \textbf{0.38} & \textbf{0.64} & \textbf{0.52} & \textbf{0.89} \\ \midrule
`\rxyz+\uone+\cuthree'   & 0.28          & \textbf{0.25} & 0.48          & 0.50          \\
\textbf{+\name}              & \textbf{0.33} & 0.21          & \textbf{0.53} & \textbf{0.52} \\ \bottomrule
\end{tabular}%
\vspace{-8pt}
\end{table}

\begin{table}
\centering
\renewcommand*{\arraystretch}{0.8}
\setlength{\tabcolsep}{13pt}
\captionof{table}{Scalable noise-aware training.}
\vspace{-8pt}
\begin{tabular}{lccc}
\toprule
Machine & Bogota & Santiago & Lima \\
\midrule
Noise-unaware & 0.74 & 0.97 & 0.87 \\
\textbf{\name} & \textbf{0.79} & \textbf{0.99} & \textbf{0.90} \\
\bottomrule
\end{tabular}%
\label{tab:scalability}
\vspace{-8pt}
\end{table}

\begin{table}
\centering
\renewcommand*{\arraystretch}{0.8}
\setlength{\tabcolsep}{10pt}
\captionof{table}{Compatible with existing noise mitigation.}
\vspace{-8pt}
\label{tab:ortho}
\begin{tabular}{lcc}
\toprule
Method & MNIST-4 & Fashion-4 \\
\midrule
Normalization only & 0.78 & 0.81 \\
Normalization + Extrapolation & \textbf{0.81} & \textbf{0.83} \\ 
\bottomrule
\end{tabular}%
\vspace{-8pt}
\end{table}
\begin{table}[t]
\centering
\renewcommand*{\arraystretch}{0.8}
\caption{Post-measurement norm. improves acc. \& SNR.}
\vspace{-10pt}
\label{tab:my-table}
\resizebox{\columnwidth}{!}{%
\begin{tabular}{cccccccccc}
\toprule
\multirow{3}{*}{\shortstack[c]{Quantum \\Devices\\$\downarrow$}} & \multicolumn{1}{l}{\multirow{3}{*}{\shortstack[c]{QNN\\ Models\\$\rightarrow$ }}} & \multicolumn{4}{c}{2 Blocks}                                                                                              & \multicolumn{4}{c}{4 Blocks}                                                \\ \cmidrule(lr){3-6}\cmidrule(lr){7-10}
                                  & \multicolumn{1}{l}{}                            & \multicolumn{2}{c}{$\times$2 Layers} & \multicolumn{2}{c}{$\times$8 Layers} &  \multicolumn{2}{l}{$\times$2 Layers} & \multicolumn{2}{l}{$\times$4 Layers} \\ \cmidrule(lr){3-4}\cmidrule(lr){5-6}\cmidrule(lr){7-8}\cmidrule(lr){9-10}
                                  & \multicolumn{1}{l}{}                            & Acc.              & SNR               & Acc.               & SNR               & Acc.              & SNR               & Acc.               & SNR              \\ \midrule
\multirow{2}{*}{Santiago}         & Baseline                                         & 0.61             & 6.15              & 0.52              & 1.79              & 0.57             & 6.96              & 0.62              & 4.20             \\  
                                  & \textbf{+Norm}                                            & \textbf{0.66}    & \textbf{15.69}    & \textbf{0.79}     & \textbf{4.85}      & \textbf{0.70}    & \textbf{11.36}    & \textbf{0.68}     & \textbf{6.55}    \\ \midrule
\multirow{2}{*}{Quito}            & Baseline                                         & 0.58             & 6.64              & 0.35              & 1.43                     & 0.60             & 3.98              & 0.29              & 1.73             \\  
                                  & \textbf{+Norm}                                            & \textbf{0.66}    & \textbf{13.92}    & \textbf{0.71}     & \textbf{2.98}    & \textbf{0.74}    & \textbf{12.26}    & \textbf{0.72}     & \textbf{4.54}    \\ \midrule
\multirow{2}{*}{Athens}           & Baseline                                         & 0.59             & 8.91              & 0.60              & 2.14             & 0.63                      & 9.52              & 0.55              & 3.54             \\  
                                  & \textbf{+Norm}                                            & \textbf{0.64}    & \textbf{20.27}    & \textbf{0.78}     & \textbf{3.47}     & \textbf{0.74}    & \textbf{14.07}    & \textbf{0.69}     & \textbf{6.09}    \\ \bottomrule
\end{tabular}%
\label{tab:norm}
}
\end{table}

\noindent\textbf{Performance on different design spaces.}
In Table~\ref{tab:design_space}, we evaluate \name on different \qnn design spaces. Specifically, the trainable quantum layers in one block
of `\zz+\rygate'~\citep{lloyd2020quantum} space contains one layer of \zz~gate, with ring connections, and one \rygate~layer. 
`\rxyz'~\citep{mcclean2018barren} space has five layers: \sqrth, \rxgate, \rygate, \rzgate, and \cz. `\zx+\xx'~\citep{farhi2018classification} space has two layers: \zx~and \xx. 
`\rxyz+\uone+\cuthree'~\citep{henderson2020quanvolutional} space, according to their random circuit basis gate set, has 11 layers in the order of \rxgate, \sgate, \cnot, \rygate, \tgate, \swap, \rzgate, \hgate, \sqrtswap, \uone~and \cuthree. We conduct experiments on MNIST-4 and Fashion-2 on 2 devices. In 13 settings out of 16, \name achieves better accuracy. Thus, \name is a general technique agnostic to \qnn model size and design space.

\noindent\textbf{Scalability.}
When classical simulation is infeasible, we can move the
the noise-injected training to real QC using techniques such as \emph{parameter shift}~\citep{crooks2019gradients}. In this case, the training cost is \emph{linearly} scaled with qubit number. Post-measurement normalization and quantization are also \emph{linearly} scalable because they are performed on the measurement outcomes. Gradients obtained with real QC are naturally noise-aware because they are directly influenced by quantum noise. To demonstrate the practicality, we train a 2-class task with two numbers as input features~\citep{jiang2021co} (Table~\ref{tab:scalability}). The \qnn has 2 blocks; each with 2 RY and a CNOT gates. The noise-unaware baseline trains the model on classical part and test on real QC. In \name, we train the model with parameter shift and test, both on real QC. We consistently outperform noise-unaware baselines.

\begin{figure*}[t]
    \centering
    \includegraphics[width=\textwidth]{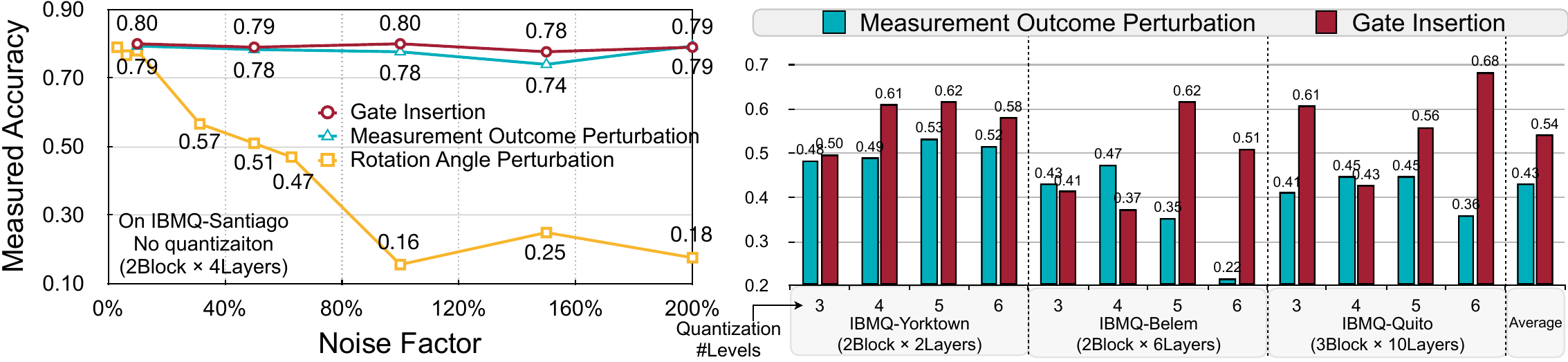}
    \vspace{-15pt}
    \caption{Ablation on different noise injection methods. \emph{Left:} Without quantization, gate insertion and measurement perturbation performs similar, both better than rotation angle perturbation. \emph{Right:} With quantization, gate insertion is better as perturbation effect can be canceled by quantization.}
    \label{fig:noisemodel}
    \vspace{-10pt}
\end{figure*}

\begin{figure}[t]
    \centering
    \includegraphics[width=\columnwidth]{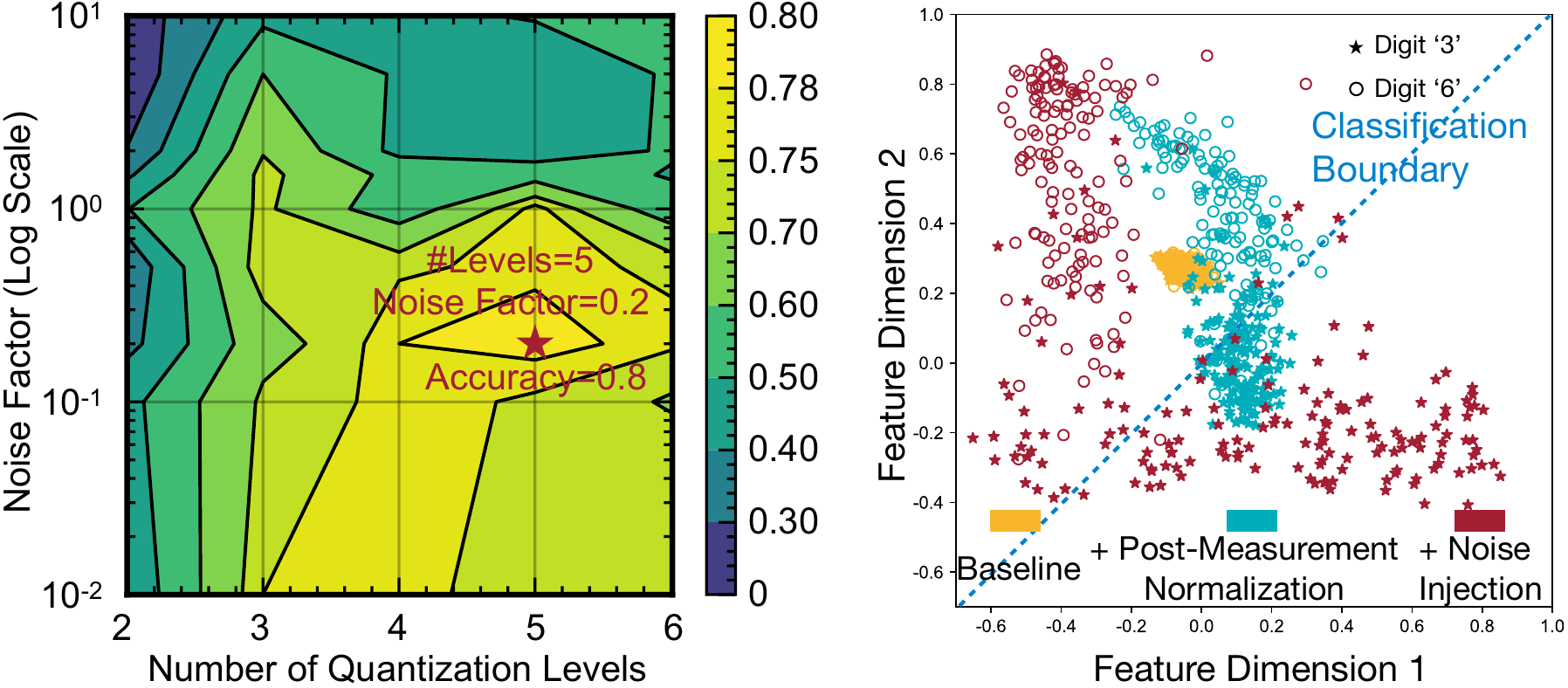}
    \vspace{-15pt}
    \caption{Left: Accuracy contours of quantization levels and noise factors. Right: Feature visualization.}
    \vspace{-10pt}
    \label{fig:noiselevelcontour}
\end{figure}

\begin{figure}
\centering
\includegraphics[width=\columnwidth]{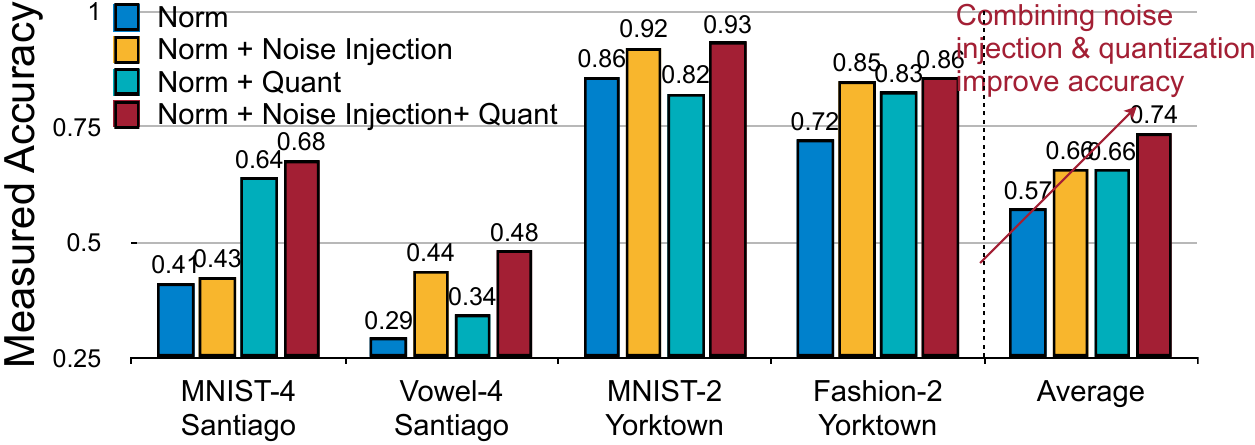}
\vspace{-15pt}
    \caption{Ablation of applying noise injection and quantization individually or jointly. }
    \label{fig:breakdown}
\vspace{-15pt}
\end{figure}

\noindent\textbf{Compatibility with existing noise mitigation.} \name is orthogonal to existing noise mitigation such as extrapolation method. It can be combined with post-measurement normalization (Table~\ref{tab:ortho}). The QNN model has 2 blocks, each with three U3+CU3 layers. For ``Normalization only", the measurement outcomes of the 3-layer block are normalized across the batch dimension. For ``Extrapolation + Normalization", we use extrapolation to estimate the standard deviation of noise-free measurement outcomes. We firstly train the QNN model to convergence and then repeat the 3 layers to 6, 9, 12 layers and obtain four standard deviations of measurement outcomes. Then we linearly extrapolate them to obtain noise-free std. We normalize the measurement outcomes of the 3-layer block to make their std the same as noise-free and then apply the proposed post-measurement norm. Results show that the extrapolation can further improve the \qnn accuracy thus being orthogonal.

\subsection{Ablation Studies}
\noindent\textbf{Ablation on post-measurement normalization.}
Table~\ref{tab:norm} compares the accuracy and signal-to-noise ratio (SNR) before and after post-measurement normalization on MNIST-4. We study 4 different \qnn architectures and evaluate on 3 devices. The normalization can significantly and consistently increase SNR.

\noindent\textbf{Ablation on different noise injection methods.}
Figure~\ref{fig:noisemodel} compares different noise injection methods. Gaussian noise statistics for perturbations are obtained from error benchmarking. The left side shows accuracy without quantization. 
With different noise factors $T$, the gate insertion and measurement outcome perturbation have similar accuracy, both better than rotation angle perturbation. A possible explanation is that the rotation angle perturbation does not consider non-rotation gates such as \xgate and \sx.
The right side further investigates the first two methods' performance with quantization. We set noise factor $T=0.5$ and 
alter quantization levels. Gate insertion outperforms perturbation by 11\% on average on 3 different devices and \qnn models. The reason is: directly added perturbation on measurement outcomes can be easily canceled by quantization, and thus it is harder for noise injection to take effect.

\noindent\textbf{Noise factor and post-measurement quantization level analysis.}
We visualize the \qnn accuracy contours on Fashion-4 on IBMQ-Athens with different noise factors and quantization levels in Figure~\ref{fig:noiselevelcontour} left. The best accuracy occurs for factor 0.2 and 5 levels. Horizontal-wise, the accuracy first goes up and then goes down. This is because too few quantization levels hurt the \qnn model capacity; too many levels cannot bring sufficient denoising effect. Vertical-wise, the accuracy also goes up and then down. Reason: when the noise is too small, the noise-injection effect is weak, thus cannot improve the model robustness; while too large noise makes the training process unstable and hurts accuracy.

\noindent\textbf{Visualization of QNN extracted features.} MNIST-2 classification result is determined by which feature is larger between the two: feature one is the sum of measurement outcomes of qubit 0 and 1; feature 2 is that of qubit 2 and 3. We visualize the two features obtained from experiments on Belem in a 2-D plane as in Figure~\ref{fig:noiselevelcontour} right. The blue dash line is the classification boundary. The circles/stars are samples of digit `3' and `6'. All the baseline points (yellow) huddled together, and all digit `3' samples are misclassified. With normalization (green), the distribution is significantly expanded, and the majority of `3' is correctly classified. Finally, after noise injection (red), the margin between the two classes is further enlarged, and the samples are farther away from the classification boundary, thus becoming more robust. 

\noindent\textbf{Breakdown of accuracy gain.}
Figure~\ref{fig:breakdown} shows the performance of only applying noise-injection, only applying quantization, and both. Using two techniques individually can both improve accuracy by 9\%. Combining two techniques delivers better performance with a 17\% accuracy gain. This indicates the benefits of synergistically applying three techniques in \name.

\section{Conclusion}
\label{sec:conclusion}
\pqc is a promising candidate to demonstrate practical quantum advantages over classical approaches. The road to such advantage relies on: (1) the discovery of novel feature embedding that encodes classical data non-linearly, and (2) overcome the impact of quantum noise. This work focuses on the latter and show that a noise-aware training pipeline with post-measurement normalization, noise injection, and post-measurement quantization can elevate the \pqc robustness against arbitrary, realistic quantum noises.
We anticipate such robust \pqc being useful in exploring practical QC applications.

\section*{Acknowledgment}

\small
We acknowledge NSF CAREER Award \#1943349, MIT-IBM Watson AI Lab, Qualcomm Innovation Fellowship. This work is funded in part by EPiQC, an NSF Expedition in Computing, under grants CCF-1730082/1730449; in part by STAQ under grant NSF Phy-1818914; in part by DOE grants DE-SC0020289 and DE-SC0020331; and in part by NSF OMA-2016136 and the Q-NEXT DOE NQI Center. We acknowledge the use of IBM Quantum services for this work.

\appendix
\appendix

\section{Appendix}

\subsection{Quantum Basics and Quantum Noise}
\label{appen:basics}
A quantum circuit uses quantum bit (\emph{qubit}) to carry information, which is a linear combination of two basis state: $\ket{\psi}=\alpha\ket{0}+\beta\ket{1}$, for $\alpha,\beta\in\mathbb{C}$, satisfying $|\alpha|^2+|\beta|^2=1$.
An $n$-qubit system can represent a linear combination of 2$^n$ basis states.
A 2$^n$-length complex statevector of all combination coefficients is used to describe the circuit state.
In quantum computations, a sequence of quantum gates are applied to perform unitary transformation on the statevector, i.e., $\ket{\psi(\boldsymbol{x}, \boldsymbol{\theta})}=\cdots U_2(\boldsymbol{x},\theta_2)U_1(\boldsymbol{x},\theta_1)\ket{0}$, where $\boldsymbol{x}$ is the input data and $\boldsymbol{\theta}$ is the trainable parameters of rotation quantum gates. As such, the input data and trainable parameters are embedded in the quantum state $\ket{\psi(\boldsymbol{x}, \boldsymbol{\theta})}$.
Finally, the computation results are obtained by qubit readout/measurement which measures the probability of a qubit state $\ket{\psi}$ collapsing to either $\ket{0}$ (i.e., output $y=+1$) or $\ket{1}$ (i.e., output $y=-1$) according to $|\alpha|^2$ and $|\beta|^2$. With sufficient samples, we can compute the expectation value: $\E[y] = (+1)|\alpha|^2 + (-1)|\beta|^2$.
By cascading multiple blocks of quantum gates and measurements, a non-linear network can be constructed to perform ML tasks.

In real quantum computer systems, errors would likely occur due to imperfect control signals, unwanted interactions between qubits, or interference from the environment ~\citep{bruzewicz2019trapped, krantz2019quantum}. As a result, qubits undergo \emph{decoherence error} (spontaneous loss of its stored information) over time, and quantum gates introduce \emph{operation errors} (e.g., coherent errors and stochastic errors) into the system. These noisy systems need to be characterized~\citep{magesan2012characterizing} and calibrated~\citep{ibm_2021} frequently to mitigate the impact of noise on computation. Noise modeling helps to paint a realistic picture of the behavior and performance of a quantum computer and enables noisy simulations~\citep{ding2020quantum}. While exact modeling and simulation is challenging, many approximate strategies~\citep{magesan2012characterizing,wallman2016noise} have been developed based on Pauli/Clifford Twirling \citep{nielsen2002quantum,silva2008scalable}. 

\subsection{General Framework for Quantum Noise Analysis}
\label{appen:proof}
In this work, we examine how to characterize and mitigate the impact of quantum noises on quantum neural networks. We observe that the trainable quantum gates and post-measurement information processing play a huge role in boosting the algorithmic robustness to realistic quantum noises. In the following analysis, we restrict attention to: (1) a general (mixed) quantum state $\rho(\boldsymbol{x},\boldsymbol{\theta})$ resulting from a QNN for input data $\boldsymbol{x}$ and trainable parameters $\boldsymbol{\theta}$, (2) single-qubit measurement output, and (3) any fixed but unknown quantum noise. For multi-qubit quantum neural networks, similar analysis follows when considered qubit by qubit. 
\subsubsection{Measurement of Quantum Neural Networks}
\begin{definition} \textnormal{(Measurement procedure)}\textbf{.}
We measure a quantum state $\rho$ in the computational basis $\ket{b}: b \in \{0,1\}$ and output $z=+1$ if we obtain $\ket{0}\bra{0}$ and $z=-1$ if we obtain $\ket{1}\bra{1}$.
\end{definition}

The expectation value of such measurement contains useful information about the quantum state $\rho$:
\begin{align}
E_Z \equiv \E[z] = tr(Z\rho),
\end{align}
where $Z$ is the Pauli-Z matrix: $Z = (+1)\ket{0}\bra{0}+(-1)\ket{1}\bra{1}$ and $tr(\cdot)$ is the trace. We can estimate the expectation value by repeating the experiment by $s$ times, obtaining $z_1, \dotsc, z_s$, with each $z_j \in \{+1, -1\}$, and calculate their empirical mean: $y = \sum_{j=1}^s\frac{z_j}{s}$. Throughout this work, we use $s = 8192$ shots for the experiments to keep the variance low.
\begin{definition} \textnormal{(Noise processes)}\textbf{.}
A physical process (such as quantum noises) that can happen to a mixed quantum state $\rho$ can be described as a linear map: $\rho \rightarrow \mathcal{E}(\rho)$, such that
\begin{align}
\mathcal{E}(\rho) = \sum_{k}O_k\rho O_k^\dagger.
\end{align}
\end{definition}
The $O_k$'s are Kraus operators satisfying $\sum_k O_k^\dagger O_k = I$. The noise process for a quantum neural network can be challenging to characterize, as it depends not only on the input to the network but also on the qubits and quantum gates used in the network. We wish to analyze this noise process by decoupling its dependence on the input data. We assume that each $O_k$ has no explicit dependence on the classical input data; this is reasonable when we fix the model architecture. 
\subsubsection{Proof of Theorem \ref{th:linear_noise}}\label{proof:linear_noise}
Now we are ready to analyze the effect of quantum noises on the measurement result from a quantum neural network. For classical data $\boldsymbol{x_i}$ from the input data set $\boldsymbol{x}$, we construct a quantum neural network that embeds the classical data in the quantum state $\rho_i \equiv \rho(\boldsymbol{x_i},\boldsymbol{\theta_i})$ where $\boldsymbol{\theta_i}$ is some training parameters. The output of the network is the expectation value of the measurement outcome $E_{z,i}^* \equiv tr(Z\rho_i)$. However, in reality, the results are transformed by some unknown process $\rho_i \rightarrow \mathcal{E}(\rho_i)$. The goal is to quantify the impact of the quantum noise on the expectation value.
\renewcommand{\thetheorem}{\ref{th:linear_noise}}%
\begin{theorem}\textnormal{(formal version)}\textbf{.}
There exists some real parameters $\beta_i$ and $\gamma$, such that the expectation value of the measurement results $E_{z.i}$ for input data $\boldsymbol{x_i}$ with the presence of any valid quantum noise $\mathcal{E}(\rho)$ can be described as a linear map from the noiseless value $E_{z,i}^*$:
\begin{align}
E_{z,i} = \gamma E_{z, i}^* + \beta_i.
\end{align}
Here $\gamma$ is a scaling constant independent of the input data $\boldsymbol{x_i}$.
\end{theorem}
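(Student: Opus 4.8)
The plan is to combine two elementary facts: every single-qubit density matrix has a Bloch (Pauli) decomposition, and a quantum noise process is \emph{by definition} a linear map on density operators. First I would write the noiseless state that the QNN prepares for input $\boldsymbol{x_i}$ as
\begin{align}
\rho_i = \tfrac{1}{2}\left( I + a_i\, X + b_i\, Y + c_i\, Z \right),
\end{align}
so that $c_i = tr(Z\rho_i) = E_{z,i}^*$ is precisely the noiseless measurement expectation, while $a_i$ and $b_i$ are the remaining Bloch components (in general input dependent).

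Next I would send $\rho_i$ through the channel. Because $\mathcal{E}(\rho) = \sum_k O_k \rho\, O_k^\dagger$ is linear in its argument and the trace is linear,
\begin{align}
E_{z,i} = tr\!\left( Z\, \mathcal{E}(\rho_i) \right) = \tfrac{1}{2}\, tr\!\left( Z\mathcal{E}(I) \right) + \tfrac{a_i}{2}\, tr\!\left( Z\mathcal{E}(X) \right) + \tfrac{b_i}{2}\, tr\!\left( Z\mathcal{E}(Y) \right) + \tfrac{c_i}{2}\, tr\!\left( Z\mathcal{E}(Z) \right).
\end{align}
Then I would simply read off the claimed linear map: set $\gamma \equiv \tfrac{1}{2}\, tr\!\left( Z\mathcal{E}(Z) \right)$ and $\beta_i \equiv \tfrac{1}{2}\, tr\!\left( Z\mathcal{E}(I) \right) + \tfrac{a_i}{2}\, tr\!\left( Z\mathcal{E}(X) \right) + \tfrac{b_i}{2}\, tr\!\left( Z\mathcal{E}(Y) \right)$, which yields $E_{z,i} = \gamma E_{z,i}^* + \beta_i$. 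The coefficient $\gamma$ is a single fixed matrix element of the channel and mentions $\boldsymbol{x_i}$ nowhere --- here I invoke the standing assumption that the Kraus operators $O_k$ carry no explicit dependence on the classical input --- whereas $\beta_i$ absorbs all the input dependence through $a_i, b_i$.

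To round out the statement with the bound $\gamma\in[-1,1]$ used in the discussion, I would evaluate the measured state at the two poles of the Bloch sphere. For $\rho = \ket{0}\bra{0}$ one has $c = +1, a = b = 0$, hence $E_z = \gamma + \beta^{(0)}$ with $\beta^{(0)} = \tfrac{1}{2} tr(Z\mathcal{E}(I))$; for $\rho = \ket{1}\bra{1}$ one has $c = -1$, hence $E_z = -\gamma + \beta^{(0)}$. Since $\mathcal{E}$ is a valid (CPTP) channel, $\mathcal{E}(\rho)$ is again a density matrix and the expectation of an observable with eigenvalues $\pm 1$ lies in $[-1,1]$; subtracting the two identities gives $2\gamma \in [-2,2]$. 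Continuity then explains the weak-noise regime: as $\mathcal{E}\to\mathrm{id}$ we have $tr(Z\mathcal{E}(Z)) \to tr(Z^2) = 2$ and $tr(Z\mathcal{E}(I)), tr(Z\mathcal{E}(X)), tr(Z\mathcal{E}(Y)) \to 0$, so $\gamma \to 1$ and $\beta_i \to 0$.

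I expect this to be essentially routine linear algebra; the only part that needs real care is the multi-qubit bookkeeping. There I would fix the single qubit being measured, trace out the rest of the register to obtain a reduced single-qubit state (still input dependent), and note that the full-register channel induces a linear map on that reduced state --- so the same split into an input-independent $\gamma$ and an input-dependent $\beta_i$ repeats verbatim. A secondary subtlety worth flagging is that ``$\gamma$ independent of the input'' presupposes the noise model is held fixed across the batch; device drift between executions makes this hold only approximately, which is consistent with the empirical spread observed in the experiments.
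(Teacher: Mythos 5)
Your proof is correct and is essentially the paper's argument viewed in the dual picture: the paper pushes the Kraus operators onto the observable, forming $\Omega=\sum_k O_k^\dagger Z O_k$ and then expanding $\rho$ in the Pauli basis, whereas you expand $\rho_i$ first and push the channel through by linearity; by cyclicity of the trace, $tr(Z\mathcal{E}(P))=tr(\Omega P)$, so your $\gamma$ and $\beta_i$ coincide exactly with the paper's. One small point in your favor: the paper asserts $tr(\Omega)=0$ to drop the constant term, which actually requires the channel to be unital ($\sum_k O_k O_k^\dagger = I$ --- true for Pauli channels but false for, e.g., amplitude damping), while you simply absorb $\tfrac{1}{2}tr(Z\mathcal{E}(I))$ into $\beta_i$, so your version holds for arbitrary CPTP noise as the theorem statement claims. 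Your added derivations of $\gamma\in[-1,1]$ and the weak-noise limit are also consistent with the surrounding discussion in the paper.
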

\begin{proof}
Suppose in the noiseless scenario the quantum state obtained from a quantum neural network is denoted as $\rho$, and the expectation value of its measurement results is $E_z^* = \E(\rho) = tr(\rho Z)$. Assume now the $\rho$ undergoes some quantum noise processes $\mathcal{E}(\rho) = \sum_k O_k \rho O_k^\dagger$. Therefore, in the presence of noise, the expectation value becomes:
\begin{align}
    E_z = \E[\mathcal{E}(\rho)] = tr(\mathcal{E}(\rho)Z) = \sum_{k}tr(O_k\rho O_k^\dagger Z)= \sum_{k}tr(\rho O_k^\dagger ZO_k),
\end{align}
where the third and fourth equality is from the properties of trace. We can further utilize the fact that an arbitrary quantum state can be expanded as:
\begin{align}
    \rho = \frac{1}{2}\left(tr(\rho)I+tr(X\rho)X+tr(Y\rho)Y+tr(Z\rho)Z\right).
\end{align}
If we denote $\Omega=\sum_{k}O_k^\dagger Z O_k$, we obtain
\begin{align}
    E_z = \frac{1}{2}tr(\Omega)+\frac{1}{2}tr(X\Omega)tr(X\rho)+\frac{1}{2}tr(Y\Omega)tr(Y\rho)+\frac{1}{2}tr(Z\Omega)tr(Z\rho).
\end{align}
Notice that $tr(\Omega) = 0$ and $tr(Z\rho) = E_z^*$. We can set $\gamma = \frac{1}{2}tr(Z\Omega) \in [-1,1]$ and $\beta_\rho = \frac{1}{2}tr(X\Omega)tr(X\rho) + \frac{1}{2}tr(Y\Omega)tr(Y\rho)$. We arrive at the linear map as desired.
\end{proof}
\renewcommand{\thetheorem}{\arabic{theorem}}

\subsection{Additional Experiments}

\subsubsection{Importance of hardware-specific noise model.}
\label{sec:appendix_hardwarespecific}
We train three \qnn models for Fashion-2 with the same architecture but different noise models from 3 devices and then deploy each model. Results in Table~\ref{tab:hardware_specific} show a diagonal pattern: the best accuracy is achieved when the noise model and inference device are the same. This is due to various noise magnitude and distribution on different devices. For instance, the gate error of Yorktown is 5\x larger than Santiago, so using Yorktown noise information for model running on Santiago is too large.
Therefore, a hardware-specific noise model is necessary for proper noise injection. However, this also marks the limitation of this work, as repeated training may be required when the noise model is updated. A future direction is to explore how to finetune already trained \qnn for fast adaption to a new noise setting, thus reducing the marginal cost.

\begin{table}[t]

\centering
\renewcommand*{\arraystretch}{1}
\setlength{\tabcolsep}{10pt}
\captionof{table}{Hardware-specific noise model can achieve best accuracy than other settings.}
\begin{tabular}{lcccc}
\toprule
\multicolumn{1}{r}{\small \underline{Use noise model of $\rightarrow$}} &  \multirow{2}{*}{Santiago}  & \multirow{2}{*}{Yorktown} & \multirow{2}{*}{Lima}  \\
        {\small \underline{Inference on $\downarrow$}} & & &  \\
        \midrule
        
        Santiago & \cellcolor{blue!20} \textbf{0.90} & \cellcolor{myred!20}0.55  & 
        \cellcolor{blue!20} \textbf{0.91} \\
        Yorktown & \cellcolor{myred!20} 0.41 &  \cellcolor{blue!20} \textbf{0.55} & \cellcolor{myred!20}0.5 \\
        
        Lima &\cellcolor{myred!20} 0.76 & \cellcolor{myred!20}0.76 & \cellcolor{blue!20}\textbf{0.89} \\                        \bottomrule
        
\end{tabular}%
\label{tab:hardware_specific}
\end{table}

\begin{table}
\centering
\renewcommand*{\arraystretch}{1}
\setlength{\tabcolsep}{6pt}
\captionof{table}{MNIST-2 accuracy with noise-adaptive compilation enabled (Qiskit optimization level=3).}
\begin{tabular}{lcccc}
\toprule
Method & Santiago  & Yorktown & Belem & Athens \\

        \midrule
        Baseline & 0.68  & 0.83 & 0.83 & 0.54 \\
        $+$Norm & 0.87 & 0.86 & 0.91 & 0.51 \\
        
        $+$Noise \& Quant & \textbf{0.92} & \textbf{0.92} & \textbf{0.91} & \textbf{0.93} \\                        \bottomrule
\end{tabular}%
\label{tab:optthree}


\end{table}

\subsubsection{Compatibility with existing noise-adaptive compilation.}
\label{sec:appendix_noiseadaptive}
We further show the compatibility of \name with state-of-the-art noise-adaptive quantum compilation techniques. Specifically, we set the optimization level of Qiskit compiler to the highest 3, which enables noise-adaptive qubit mapping and instruction scheduling. Then we inference the \name trained model and compare the accuracy of MNIST-2 in Table~\ref{tab:optthree}. With noise-adaptive compilation, the accuracy of baseline models is improved. While on top of that, the \name can still provide over 10\% accuracy improvements, demonstrating the extensive applicability of our methods.

\subsubsection{Experiments on Fully Quantum Models}
\label{sec:appendix_fullyquantum}
\begin{table*}[t]
\centering
\renewcommand*{\arraystretch}{1}
\setlength{\tabcolsep}{10pt}

\caption{Effect of \name on fully quantum models.}
\label{tab:fully_quantum}
\begin{tabular}{lcccccccc}
\toprule
IBMQ Machine & Model & Method & MNIST-4 & Fashion-4 & Vowel-4 & MNIST-2 & Fashion-2 & Cifar-2 \\
\midrule
\multirow{2}{*}{Santiago} & \multirow{2}{*}{3 Layer} & Baseline & 0.64 & 0.78 & 0.41 & 0.94 & 0.89 & \textbf{0.59} \\
& & \textbf{\name} & \textbf{0.78} & \textbf{0.82} & \textbf{0.53} & \textbf{0.96} & \textbf{0.90} & 0.58 \\
\midrule
\multirow{2}{*}{Santiago} & \multirow{2}{*}{6 Layer} & Baseline & 0.61 & 0.37 & 0.22 & 0.51 & 0.52 & 0.52 \\
& & \textbf{\name} &  \textbf{0.62} &  \textbf{0.69} &  \textbf{0.22} &  \textbf{0.84} &  \textbf{0.89} &  \textbf{0.56} \\
\midrule
\multirow{2}{*}{Yorktown} & \multirow{2}{*}{3 Layer} & Baseline & 0.49 & 0.53 & 0.4 & 0.88 & 0.85 & 0.51 \\
& & \textbf{\name} & \textbf{0.55} & \textbf{0.66} & \textbf{0.42} & \textbf{0.9} & \textbf{0.91} & \textbf{0.55} \\
\midrule
\multirow{2}{*}{Yorktown} & \multirow{2}{*}{6 Layer} & Baseline & 0.22 & 0.33 &  \textbf{0.26} & 0.73 & 0.80 &  \textbf{0.54} \\
& & \textbf{\name} & \textbf{0.42} & \textbf{0.35} & 0.25 & \textbf{0.78} & \textbf{0.80} & 0.52 \\
\midrule
\multirow{2}{*}{Belem} & \multirow{2}{*}{3 Layer} & Baseline & 0.53 & \textbf{0.60} & 0.37 & 0.64 & 0.81 & 0.51
 \\
& & \textbf{\name} & \textbf{0.58} & 0.42 & \textbf{0.39} & \textbf{0.93} & \textbf{0.85} & \textbf{0.55} \\
\midrule
\multirow{2}{*}{Belem} & \multirow{2}{*}{6 Layer} & Baseline & 0.27 & 0.18 & 0.21 & 0.54 & 0.48 & 0.43
 \\
& & \textbf{\name} & \textbf{0.43} & \textbf{0.31} & \textbf{0.22} & \textbf{0.54} & \textbf{0.54} & \textbf{0.52} \\

\bottomrule
\end{tabular}%
\end{table*}

For the results in Section~\ref{sec:results}, the \qnn models contain multiple blocks. Here we further experiment on \emph{fully quantum} models which only contains one single block to show the strong generality of \name as in Table~\ref{tab:fully_quantum}. We select two fully quantum models, with three and six U3+CU3 layers, respectively, and experiment with six tasks on two machines. We apply the post-measurement normalization and quantization to the measurement outcomes of the last layer and use noise factor 0.5 and quantization level 6. No intermediate measurements are required. Our methods can still outperform baselines by \textbf{7.4\%} on average. Therefore, The noise injection can be applied to different kinds of variational quantum circuits, no matter whether the output of one layer is measured and passed to the next layer. Furthermore, the post-measurement normalization and quantization can also benefit various quantum circuits because they reduce the noise impact on measurement outcomes.

\subsubsection{Experiments on Effect of Number of Intermediate Measurements}
\label{sec:appendix_intermediate}
\begin{table}[t]
\centering

\caption{Effect of number of intermediate measurements.}
\label{tab:n_blocks}
\begin{tabular}{lcccc}
\toprule
\multirow{2}{*}{Task} & 1 Block & 2 Blocks  & 3 Blocks  & 6 Blocks\\
& \x 6 Layers & \x 3 Layers & \x 2 Layers & \x 1 Layer \\
\midrule
MNIST-4 & 0.62 & \textbf{0.74} & 0.71 & 0.66 \\
Fashion-4 & 0.69 & \textbf{0.82} & 0.78 & 0.68 \\
\bottomrule
\end{tabular}%
\end{table}

We also explore under the same number of parameters whether a fully quantum model is the best choice in the NISQ era. There exists a \emph{tradeoff} on the number of intermediate measurements as in Table~\ref{tab:n_blocks}. More measurements mean less noise impact because we can perform post-measurement normalization and quantization on measurement outcomes. However, measurements will collapse the state vector in the large Hilbert space back to the small classical space, hurting the model capacity. We perform experiments on the IBMQ-Santiago machine and find there exists a sweet spot to achieve the highest deployment accuracy: the best model contains 2 blocks and each has 3 layers.

Furthermore, we show direct accuracy comparisons between the original (with measurements in between) \qnn and fully-quantum \qnn in Table~\ref{tab:direct}. In each row, they have exactly the same dataset, same hardware. They have nearly the same architecture: same encoder/measurement, same gate sets, same layers, same number of parameters; the only difference is whether being measured and encoded back to quantum in the middle.

From the experimental results, we can see that under the total 6-layer setting, the 2 Block x 3 Layer can have better accuracy in most cases. This is because we perform normalization and quantization in the middle that can mitigate the noise impacts.

We also would like to emphasize that how to design the best architecture is not the main focus of our work. \name is architecture-agnostic and can be applied to various architectures to improve their robustness on real QC devices, as illustrated in paper Table~\ref{tab:design_space}.

\begin{table}[t]
\centering
\renewcommand*{\arraystretch}{1}
\setlength{\tabcolsep}{5pt}
\caption{Direct comparison between \qnn models with measurement in between and fully-quantum \qnn models.}
\label{tab:direct}
\begin{tabular}{llcc}
\toprule
Machine & Task & Fully-Quantum (6L) & Original (2B \x 3L) \\
\midrule
Santiago & MNIST-4 & 0.62 & \textbf{0.74} \\
Santiago & Fashion-4 & 0.69 & \textbf{0.82} \\ 
Santiago & MNIST-2 & 0.84 & \textbf{0.86} \\
\midrule
Belem & MNIST-4 & \textbf{0.43} & 0.37 \\
Belem & Fashion-4 & 0.31 & \textbf{0.34} \\ 
Belem & MNIST-2 & 0.54 & \textbf{0.60} \\ 
\bottomrule
\end{tabular}%
\end{table}

\subsubsection{Accuracy Gap between Using Noise Model and Real QC}
\label{sec:appendix_gap}
\begin{table*}[t]
\centering

\caption{Accuracy gap between evaluation using noise model and real QC.}
\label{tab:gap}
\begin{tabular}{lllcccccc}
\toprule
Machine & Model & Method & MNIST-4 & Fashion-4 & Vowel-4 & MNIST-2 & Fashion-2 & Cifar-2 \\
\midrule
\multirow{2}{*}{Santiago} & \multirow{2}{*}{\shortstack[l]{2 Blocks \\ \x 12 Layer}} & Noise model & 0.73 & 0.74 & 0.51 & 0.95 & 0.92 & 0.65 \\
& & Real QC & 0.68 & 0.75 & 0.48 & 0.94 & 0.88 & 0.59 \\
\midrule
\multirow{2}{*}{Yorktown} & \multirow{2}{*}{\shortstack[l]{2 Blocks \\ \x 2 Layer}} & Noise model & 0.68 & 0.7 & 0.44 & 0.92 & 0.90 & 0.59 \\
& & Real QC & 0.62 & 0.65 & 0.44 & 0.93 & 0.86 & 0.60 \\
\midrule
\multirow{2}{*}{Belem} & \multirow{2}{*}{\shortstack[l]{2 Blocks \\ \x 6 Layer}} & Noise model & 0.64 & 0.72 & 0.41 & 0.96 & 0.82 & 0.64 \\
& & Real QC & 0.58 & 0.62 & 0.41 & 0.88 & 0.8 & 0.61 \\

\bottomrule
\end{tabular}%
\end{table*}

To demonstrate the reliability of noise models, we show the accuracy gap of \qnn models evaluated with noise model and on real QC as in Table~\ref{tab:gap}. We can see that the accuracy gaps are typically smaller than 5\%, indicating \emph{high reliability} of noise models.

\subsubsection{Accuracy improvements comparison as number of classes increases}
\label{sec:appendix_improve}
Since we have different tasks with various number of classes, we compare the average accuracy improvements between them in Table~\ref{tab:improve}. We can see that the relative accuracy improvement on 10-class (230\%) is significantly higher than 4-class and 2-class. That of 4-class is also higher than 2-class. To improve the same absolute accuracy, it is clearly more difficult on a 10-class task than on a 2-class task. So \name is highly effective on 10-class tasks.

\begin{table*}[t]
\centering
\renewcommand*{\arraystretch}{1}
\setlength{\tabcolsep}{10pt}

\caption{Improvements are still significant as the number of classes increases.}
\label{tab:improve}
\begin{tabular}{lcccc}
\toprule
Task Average Accuracy & Baseline & \name & Absolute Improvement & Relative Improvement\\
\midrule
2-classification & 0.58 & 0.76 & 0.28 & 48\% \\
4-classification & 0.31 & 0.57 & 0.26 & 84\% \\
10-classification & 0.1 & 0.33 & 0.23 & 230\% \\
\bottomrule
\end{tabular}%
\end{table*}

\subsubsection{Experiments on Using Validation Set Statistics for Test Set}
\label{sec:appendix_validstats}
If the test batch size is small for the deployment on real QC hardware, then the statistics may not be accurate enough for post-measurement normalization. In this case, we can profile the statistics of the validation set on real hardware ahead of time and then use the validation set mean and std to normalize the test set measurement outcomes.

We experiment with three tasks, each on three quantum devices. We show the mean and std of measurement outcomes of each qubit on the validation set and test set as in Table~\ref{tab:usevalidstat}. We can see that the statistics of validation and test sets are similar. The last column of Table~\ref{tab:usevalidstat} shows the \emph{accuracy of test set} using statistics of the test set itself and validation set, respectively. In 9 benchmarks, the accuracy of two settings is very close. The average accuracy of using test set stats is 0.67; using validation set stats is 0.65.
\begin{table*}[t]
\centering
\renewcommand*{\arraystretch}{1}
\setlength{\tabcolsep}{6pt}
\caption{Statistics of test and validation set; Accuracy of test set using test stats and validation stats.}
\label{tab:usevalidstat}

\begin{tabular}{lcccc}
\toprule
\multirow{1}{*}{Task} & Stats & \multicolumn{1}{c}{MEAN}    & \multicolumn{1}{c}{STD} & \multicolumn{1}{c}{Accuracy}  \\ 
\midrule
\multirow{2}{*}{Fashion-4-Santiago}   & Test Stats & [ 0.0469,  0.0025, -0.0581, -0.0191]  & [0.0868, 0.0496, 0.1021, 0.1152]  & 0.75 \\
&  Valid Stats   & [ 0.0679,  0.0025, -0.0519, -0.0473]          & [0.0915, 0.0448, 0.0884, 0.1114] & 0.70 \\ 
\midrule
\multirow{2}{*}{Fashion-4-Yorktown}   & Test Stats &  [-0.0396,  0.0478,  0.0995,  0.1375] & [0.1279, 0.3368, 0.1761, 0.1538]  & 0.65 \\
&  Valid Stats   & [-0.0362,  0.0771,  0.0965,  0.1535] & [0.1230, 0.3233, 0.1835, 0.1584] & 0.65 \\ 
\midrule
\multirow{2}{*}{Fashion-4-Belem} & Test Stats & [ 0.1118,  0.0075,  0.0901, -0.0005] & [0.0868, 0.1511, 0.1391, 0.2039] & 0.62 \\
&  Valid Stats   & [ 0.1508, -0.0130,  0.0533,  0.0478] & [0.0882, 0.1298, 0.1315, 0.1401] & 0.53 \\ 
\midrule
\multirow{2}{*}{Vowel-4-Santiago}   & Test Stats & [0.1091, 0.0526, 0.0290, 0.2172] & [0.0551, 0.0260, 0.0554, 0.0422] & 0.48  \\
&  Valid Stats & [0.1042, 0.0698, 0.0458, 0.1951] & [0.0418, 0.0226, 0.0443, 0.0362] & 0.43 \\ 
\midrule
\multirow{2}{*}{Vowel-4-Yorktown} & Test Stats & [ 0.0900, -0.3700, -0.2524,  0.1645] & [0.0997, 0.0580, 0.0663, 0.1198] & 0.44  \\
&  Valid Stats   & [ 0.0841, -0.3869, -0.2948,  0.1736] & [0.0946, 0.0651, 0.0615, 0.1199] & 0.41 \\ 
\midrule
\multirow{2}{*}{Vowel-4-Belem}   & Test Stats & [0.0115, 0.0800, 0.1703, 0.1775] & [0.0171, 0.0411, 0.0518, 0.0293] & 0.41 \\
&  Valid Stats   & [-0.0213,  0.0459,  0.1930,  0.1628] & [0.0145, 0.0335, 0.0478, 0.0263] & 0.40 \\ 
\midrule
\multirow{2}{*}{MNIST-2-Santiago}   & Test Stats & [-0.0581, -0.0657,  0.0088,  0.0170] & [0.0737, 0.1090, 0.1561, 0.1351] & 0.94 \\
&  Valid Stats   & [-0.0739,  0.0001, -0.0113,  0.00239] & [0.0666, 0.0840, 0.1468, 0.1167]  & 0.95 \\ 
\midrule
\multirow{2}{*}{MNIST-2-Yorktown}   & Test Stats & [ 0.0892, -0.0007,  0.0548,  0.0485] & [0.1281, 0.3501, 0.2100, 0.2975] & 0.93 \\
&  Valid Stats   & [0.0704, 0.0536, 0.0204, 0.1043] & [0.1377, 0.3813, 0.2596, 0.2955] & 0.91 \\ 
\midrule
\multirow{2}{*}{MNIST-2-Belem}   & Test Stats & [-0.0649,  0.1949,  0.0540,  0.1313] & [0.0856, 0.1137, 0.1553, 0.1688] & 0.88 \\
&  Valid Stats   & [-0.0540,  0.2074,  0.0744,  0.1872] & [0.0561, 0.1008, 0.1345, 0.1103] & 0.91  \\ 
\midrule
\multirow{2}{*}{\textbf{Average}} & Test Stats  & --- & --- & \textbf{0.67}  \\

& Valid Stats & --- & --- & \textbf{0.65} \\

 \bottomrule
\end{tabular}%

\end{table*}

Therefore, using the statistics of validation set can bring similar accuracy to using statistics of test set itself; thus the \name can support small test batch size using validation set stats.

\subsection{Hyperparameters for main results}
\label{sec:appendix_hyper}

Table~\ref{tab:hyper} shows the detailed noise factor and quantization level for all the tasks in Table~\ref{tab:main}.
\begin{table*}[t]
\centering

\caption{Hyperparameters of Table~\ref{tab:main}.}
\label{tab:hyper}

\begin{tabular}{lcccccc}
\toprule
Task, (noise-factor, quantization level) & MNIST-4 & Fashion-4 & Vowel-4 & MNIST-2 & Fashion-2 & Cifar-2 \\
\midrule
QNN (2 Blocks \x 12 Layers) on Santiago & (1, 3) & (0.5, 6) & (0.5, 6) & (1, 4) & (1, 6) & (0.5, 6) \\
QNN (2 Blocks x 2 Layers) on Yorktown& (0.5, 6)& (1, 5)& (0.1, 5)& (0.5, 5)& (0.1, 6)& (0.1, 3) \\
QNN (2 Blocks x 6 Layers) on Belem& (0.5, 5)& (1.5, 6)& (0.5, 3)& (0.5, 5)& (0.1, 4)& (0.5, 6) \\
QNN (3 Blocks x 10 Layers) on Athens & (0.1, 6)& (0.1, 5)& (0.5, 6)& (0.1, 6)& (0.5, 6)& (0.1, 6) \\
\midrule
\midrule
Task, (noise-factor, quantization level) & MNIST-10 & Fashion-10 &&&& \\ 
\midrule
QNN (2 Blocks x 2 Layers) on Melbourne & (0.1, 6) & (0.1, 5) &&&& \\

\bottomrule
\end{tabular}%

\end{table*}


{\small
\balance
\bibliographystyle{ACM-Reference-Format}
\bibliography{main}


\begin{thebibliography}{30}


\ifx \showCODEN    \undefined \def \showCODEN     #1{\unskip}     \fi
\ifx \showDOI      \undefined \def \showDOI       #1{#1}\fi
\ifx \showISBNx    \undefined \def \showISBNx     #1{\unskip}     \fi
\ifx \showISBNxiii \undefined \def \showISBNxiii  #1{\unskip}     \fi
\ifx \showISSN     \undefined \def \showISSN      #1{\unskip}     \fi
\ifx \showLCCN     \undefined \def \showLCCN      #1{\unskip}     \fi
\ifx \shownote     \undefined \def \shownote      #1{#1}          \fi
\ifx \showarticletitle \undefined \def \showarticletitle #1{#1}   \fi
\ifx \showURL      \undefined \def \showURL       {\relax}        \fi
\providecommand\bibfield[2]{#2}
\providecommand\bibinfo[2]{#2}
\providecommand\natexlab[1]{#1}
\providecommand\showeprint[2][]{arXiv:#2}

\bibitem[\protect\citeauthoryear{Amin, Andriyash, Rolfe, Kulchytskyy, and
  Melko}{Amin et~al\mbox{.}}{2018}]%
        {amin2018quantum}
\bibfield{author}{\bibinfo{person}{Mohammad~H Amin}, \bibinfo{person}{Evgeny
  Andriyash}, \bibinfo{person}{Jason Rolfe}, \bibinfo{person}{Bohdan
  Kulchytskyy}, {and} \bibinfo{person}{Roger Melko}.}
  \bibinfo{year}{2018}\natexlab{}.
\newblock \showarticletitle{Quantum boltzmann machine}.
\newblock \bibinfo{journal}{\emph{Physical Review X}} \bibinfo{volume}{8},
  \bibinfo{number}{2} (\bibinfo{year}{2018}).
\newblock


\bibitem[\protect\citeauthoryear{Bruzewicz, Chiaverini, McConnell, and
  Sage}{Bruzewicz et~al\mbox{.}}{2019}]%
        {bruzewicz2019trapped}
\bibfield{author}{\bibinfo{person}{Colin~D Bruzewicz}, \bibinfo{person}{John
  Chiaverini}, \bibinfo{person}{Robert McConnell}, {and}
  \bibinfo{person}{Jeremy~M Sage}.} \bibinfo{year}{2019}\natexlab{}.
\newblock \showarticletitle{Trapped-ion quantum computing: Progress and
  challenges}.
\newblock \bibinfo{journal}{\emph{Applied Physics Reviews}}
  \bibinfo{volume}{6}, \bibinfo{number}{2} (\bibinfo{year}{2019}),
  \bibinfo{pages}{021314}.
\newblock


\bibitem[\protect\citeauthoryear{Crooks}{Crooks}{2019}]%
        {crooks2019gradients}
\bibfield{author}{\bibinfo{person}{Gavin~E Crooks}.}
  \bibinfo{year}{2019}\natexlab{}.
\newblock \showarticletitle{Gradients of parameterized quantum gates using the
  parameter-shift rule and gate decomposition}.
\newblock \bibinfo{journal}{\emph{arXiv preprint 1905.13311}}
  (\bibinfo{year}{2019}).
\newblock


\bibitem[\protect\citeauthoryear{Ding and Chong}{Ding and Chong}{2020}]%
        {ding2020quantum}
\bibfield{author}{\bibinfo{person}{Yongshan Ding} {and}
  \bibinfo{person}{Frederic~T Chong}.} \bibinfo{year}{2020}\natexlab{}.
\newblock \showarticletitle{Quantum computer systems: Research for noisy
  intermediate-scale quantum computers}.
\newblock \bibinfo{journal}{\emph{Synthesis Lectures on Computer Architecture}}
  \bibinfo{volume}{15}, \bibinfo{number}{2} (\bibinfo{year}{2020}),
  \bibinfo{pages}{1--227}.
\newblock


\bibitem[\protect\citeauthoryear{Farhi and Neven}{Farhi and Neven}{2018}]%
        {farhi2018classification}
\bibfield{author}{\bibinfo{person}{Edward Farhi} {and} \bibinfo{person}{Hartmut
  Neven}.} \bibinfo{year}{2018}\natexlab{}.
\newblock \showarticletitle{Classification with quantum neural networks on near
  term processors}.
\newblock \bibinfo{journal}{\emph{arXiv preprint arXiv:1802.06002}}
  (\bibinfo{year}{2018}).
\newblock


\bibitem[\protect\citeauthoryear{Han, Mao, and Dally}{Han
  et~al\mbox{.}}{2015}]%
        {han2015deep}
\bibfield{author}{\bibinfo{person}{Song Han}, \bibinfo{person}{Huizi Mao},
  {and} \bibinfo{person}{William~J Dally}.} \bibinfo{year}{2015}\natexlab{}.
\newblock \showarticletitle{Deep compression: Compressing deep neural networks
  with pruning, trained quantization and huffman coding}.
\newblock \bibinfo{journal}{\emph{arXiv preprint arXiv:1510.00149}}
  (\bibinfo{year}{2015}).
\newblock


\bibitem[\protect\citeauthoryear{Henderson, Shakya, Pradhan, and
  Cook}{Henderson et~al\mbox{.}}{2020}]%
        {henderson2020quanvolutional}
\bibfield{author}{\bibinfo{person}{Maxwell Henderson},
  \bibinfo{person}{Samriddhi Shakya}, \bibinfo{person}{Shashindra Pradhan},
  {and} \bibinfo{person}{Tristan Cook}.} \bibinfo{year}{2020}\natexlab{}.
\newblock \showarticletitle{Quanvolutional neural networks: powering image
  recognition with quantum circuits}.
\newblock \bibinfo{journal}{\emph{Quantum Machine Intelligence}}
  \bibinfo{volume}{2}, \bibinfo{number}{1} (\bibinfo{year}{2020}),
  \bibinfo{pages}{1--9}.
\newblock


\bibitem[\protect\citeauthoryear{IBM}{IBM}{[n.\,d.]}]%
        {ibmq}
\bibfield{author}{\bibinfo{person}{IBM}.} \bibinfo{year}{[n.\,d.]}\natexlab{}.
\newblock \showarticletitle{IBM Quantum}.
\newblock  (\bibinfo{year}{[n.\,d.]}).
\newblock
\urldef\tempurl%
\url{https://quantum-computing.ibm.com/}
\showURL{%
\tempurl}


\bibitem[\protect\citeauthoryear{IBM}{IBM}{2021}]%
        {ibm_2021}
\bibfield{author}{\bibinfo{person}{Qiskit IBM}.}
  \bibinfo{year}{2021}\natexlab{}.
\newblock
\newblock
\urldef\tempurl%
\url{https://qiskit.org/textbook/ch-quantum-hardware/calibrating-qubits-pulse.html}
\showURL{%
\tempurl}


\bibitem[\protect\citeauthoryear{Ioffe and Szegedy}{Ioffe and Szegedy}{2015}]%
        {ioffe2015batch}
\bibfield{author}{\bibinfo{person}{Sergey Ioffe} {and}
  \bibinfo{person}{Christian Szegedy}.} \bibinfo{year}{2015}\natexlab{}.
\newblock \showarticletitle{Batch normalization: Accelerating deep network
  training by reducing internal covariate shift}. In
  \bibinfo{booktitle}{\emph{ICML}}. PMLR, \bibinfo{pages}{448--456}.
\newblock


\bibitem[\protect\citeauthoryear{Jiang, Xiong, and Shi}{Jiang
  et~al\mbox{.}}{2021}]%
        {jiang2021co}
\bibfield{author}{\bibinfo{person}{Weiwen Jiang}, \bibinfo{person}{Jinjun
  Xiong}, {and} \bibinfo{person}{Yiyu Shi}.} \bibinfo{year}{2021}\natexlab{}.
\newblock \showarticletitle{A co-design framework of neural networks and
  quantum circuits towards quantum advantage}.
\newblock \bibinfo{journal}{\emph{Nature communications}} \bibinfo{volume}{12},
  \bibinfo{number}{1} (\bibinfo{year}{2021}), \bibinfo{pages}{1--13}.
\newblock


\bibitem[\protect\citeauthoryear{Krantz, Kjaergaard, Yan, Orlando, Gustavsson,
  and Oliver}{Krantz et~al\mbox{.}}{2019}]%
        {krantz2019quantum}
\bibfield{author}{\bibinfo{person}{Philip Krantz}, \bibinfo{person}{Morten
  Kjaergaard}, \bibinfo{person}{Fei Yan}, \bibinfo{person}{Terry~P Orlando},
  \bibinfo{person}{Simon Gustavsson}, {and} \bibinfo{person}{William~D
  Oliver}.} \bibinfo{year}{2019}\natexlab{}.
\newblock \showarticletitle{A quantum engineer's guide to superconducting
  qubits}.
\newblock \bibinfo{journal}{\emph{Applied Physics Reviews}}
  \bibinfo{volume}{6}, \bibinfo{number}{2} (\bibinfo{year}{2019}),
  \bibinfo{pages}{021318}.
\newblock


\bibitem[\protect\citeauthoryear{Krizhevsky, Nair, and Hinton}{Krizhevsky
  et~al\mbox{.}}{[n.\,d.]}]%
        {cifar10}
\bibfield{author}{\bibinfo{person}{Alex Krizhevsky}, \bibinfo{person}{Vinod
  Nair}, {and} \bibinfo{person}{Geoffrey Hinton}.}
  \bibinfo{year}{[n.\,d.]}\natexlab{}.
\newblock \showarticletitle{CIFAR-10 (Canadian Institute for Advanced
  Research)}.
\newblock  (\bibinfo{year}{[n.\,d.]}).
\newblock


\bibitem[\protect\citeauthoryear{{Lecun}, {Bottou}, {Bengio}, and
  {Haffner}}{{Lecun} et~al\mbox{.}}{1998}]%
        {726791}
\bibfield{author}{\bibinfo{person}{Y. {Lecun}}, \bibinfo{person}{L. {Bottou}},
  \bibinfo{person}{Y. {Bengio}}, {and} \bibinfo{person}{P. {Haffner}}.}
  \bibinfo{year}{1998}\natexlab{}.
\newblock \showarticletitle{Gradient-based learning applied to document
  recognition}.
\newblock \bibinfo{journal}{\emph{Proc. IEEE}} \bibinfo{volume}{86},
  \bibinfo{number}{11} (\bibinfo{year}{1998}), \bibinfo{pages}{2278--2324}.
\newblock


\bibitem[\protect\citeauthoryear{Liang, Wang, Yang, Yang, Shi, and Jiang}{Liang
  et~al\mbox{.}}{2021}]%
        {liang2021can}
\bibfield{author}{\bibinfo{person}{Zhiding Liang}, \bibinfo{person}{Zhepeng
  Wang}, \bibinfo{person}{Junhuan Yang}, \bibinfo{person}{Lei Yang},
  \bibinfo{person}{Yiyu Shi}, {and} \bibinfo{person}{Weiwen Jiang}.}
  \bibinfo{year}{2021}\natexlab{}.
\newblock \showarticletitle{Can Noise on Qubits Be Learned in Quantum Neural
  Network? A Case Study on QuantumFlow}. In \bibinfo{booktitle}{\emph{ICCAD}}.
  IEEE, \bibinfo{pages}{1--7}.
\newblock


\bibitem[\protect\citeauthoryear{Lin, Gan, and Han}{Lin et~al\mbox{.}}{2019}]%
        {lin2019defensive}
\bibfield{author}{\bibinfo{person}{Ji Lin}, \bibinfo{person}{Chuang Gan}, {and}
  \bibinfo{person}{Song Han}.} \bibinfo{year}{2019}\natexlab{}.
\newblock \showarticletitle{Defensive quantization: When efficiency meets
  robustness}.
\newblock \bibinfo{journal}{\emph{arXiv preprint arXiv:1904.08444}}
  (\bibinfo{year}{2019}).
\newblock


\bibitem[\protect\citeauthoryear{Lloyd, Schuld, Ijaz, Izaac, and
  Killoran}{Lloyd et~al\mbox{.}}{2020}]%
        {lloyd2020quantum}
\bibfield{author}{\bibinfo{person}{Seth Lloyd}, \bibinfo{person}{Maria Schuld},
  \bibinfo{person}{Aroosa Ijaz}, \bibinfo{person}{Josh Izaac}, {and}
  \bibinfo{person}{Nathan Killoran}.} \bibinfo{year}{2020}\natexlab{}.
\newblock \showarticletitle{Quantum embeddings for machine learning}.
\newblock \bibinfo{journal}{\emph{arXiv preprint 2001.03622}}
  (\bibinfo{year}{2020}).
\newblock


\bibitem[\protect\citeauthoryear{Magesan, Gambetta, and Emerson}{Magesan
  et~al\mbox{.}}{2012}]%
        {magesan2012characterizing}
\bibfield{author}{\bibinfo{person}{Easwar Magesan}, \bibinfo{person}{Jay~M
  Gambetta}, {and} \bibinfo{person}{Joseph Emerson}.}
  \bibinfo{year}{2012}\natexlab{}.
\newblock \showarticletitle{Characterizing quantum gates via randomized
  benchmarking}.
\newblock \bibinfo{journal}{\emph{Physical Review A}} \bibinfo{volume}{85},
  \bibinfo{number}{4} (\bibinfo{year}{2012}), \bibinfo{pages}{042311}.
\newblock


\bibitem[\protect\citeauthoryear{Matsuoka}{Matsuoka}{1992}]%
        {155944}
\bibfield{author}{\bibinfo{person}{K. Matsuoka}.}
  \bibinfo{year}{1992}\natexlab{}.
\newblock \showarticletitle{Noise injection into inputs in back-propagation
  learning}.
\newblock \bibinfo{journal}{\emph{IEEE Transactions on Systems, Man, and
  Cybernetics}} \bibinfo{volume}{22}, \bibinfo{number}{3}
  (\bibinfo{year}{1992}), \bibinfo{pages}{436--440}.
\newblock


\bibitem[\protect\citeauthoryear{McClean, Boixo, Smelyanskiy, Babbush, and
  Neven}{McClean et~al\mbox{.}}{2018}]%
        {mcclean2018barren}
\bibfield{author}{\bibinfo{person}{Jarrod~R McClean}, \bibinfo{person}{Sergio
  Boixo}, \bibinfo{person}{Vadim~N Smelyanskiy}, \bibinfo{person}{Ryan
  Babbush}, {and} \bibinfo{person}{Hartmut Neven}.}
  \bibinfo{year}{2018}\natexlab{}.
\newblock \showarticletitle{Barren plateaus in quantum neural network training
  landscapes}.
\newblock \bibinfo{journal}{\emph{Nature communications}} \bibinfo{volume}{9},
  \bibinfo{number}{1} (\bibinfo{year}{2018}), \bibinfo{pages}{1--6}.
\newblock


\bibitem[\protect\citeauthoryear{Nielsen and Chuang}{Nielsen and
  Chuang}{2002}]%
        {nielsen2002quantum}
\bibfield{author}{\bibinfo{person}{Michael~A Nielsen} {and}
  \bibinfo{person}{Isaac Chuang}.} \bibinfo{year}{2002}\natexlab{}.
\newblock \bibinfo{title}{Quantum computation and quantum information}.
\newblock
\newblock


\bibitem[\protect\citeauthoryear{Silva, Magesan, Kribs, and Emerson}{Silva
  et~al\mbox{.}}{2008}]%
        {silva2008scalable}
\bibfield{author}{\bibinfo{person}{Marcus Silva}, \bibinfo{person}{Easwar
  Magesan}, \bibinfo{person}{David~W Kribs}, {and} \bibinfo{person}{Joseph
  Emerson}.} \bibinfo{year}{2008}\natexlab{}.
\newblock \showarticletitle{Scalable protocol for identification of correctable
  codes}.
\newblock \bibinfo{journal}{\emph{Physical Review A}} \bibinfo{volume}{78},
  \bibinfo{number}{1} (\bibinfo{year}{2008}), \bibinfo{pages}{012347}.
\newblock


\bibitem[\protect\citeauthoryear{Temme, Bravyi, and Gambetta}{Temme
  et~al\mbox{.}}{2017}]%
        {temme2017error}
\bibfield{author}{\bibinfo{person}{Kristan Temme}, \bibinfo{person}{Sergey
  Bravyi}, {and} \bibinfo{person}{Jay~M Gambetta}.}
  \bibinfo{year}{2017}\natexlab{}.
\newblock \showarticletitle{Error mitigation for short-depth quantum circuits}.
\newblock \bibinfo{journal}{\emph{Physical review letters}}
  \bibinfo{volume}{119}, \bibinfo{number}{18} (\bibinfo{year}{2017}),
  \bibinfo{pages}{180509}.
\newblock


\bibitem[\protect\citeauthoryear{Wallman and Emerson}{Wallman and
  Emerson}{2016}]%
        {wallman2016noise}
\bibfield{author}{\bibinfo{person}{Joel~J Wallman} {and}
  \bibinfo{person}{Joseph Emerson}.} \bibinfo{year}{2016}\natexlab{}.
\newblock \showarticletitle{Noise tailoring for scalable quantum computation
  via randomized compiling}.
\newblock \bibinfo{journal}{\emph{Physical Review A}} \bibinfo{volume}{94},
  \bibinfo{number}{5} (\bibinfo{year}{2016}), \bibinfo{pages}{052325}.
\newblock


\bibitem[\protect\citeauthoryear{Wang, Ding, Gu, Lin, Pan, Chong, and Han}{Wang
  et~al\mbox{.}}{2022a}]%
        {wang2021quantumnas}
\bibfield{author}{\bibinfo{person}{Hanrui Wang}, \bibinfo{person}{Yongshan
  Ding}, \bibinfo{person}{Jiaqi Gu}, \bibinfo{person}{Yujun Lin},
  \bibinfo{person}{David~Z Pan}, \bibinfo{person}{Frederic~T Chong}, {and}
  \bibinfo{person}{Song Han}.} \bibinfo{year}{2022}\natexlab{a}.
\newblock \showarticletitle{QuantumNAS: Noise-adaptive search for robust
  quantum circuits}.
\newblock \bibinfo{journal}{\emph{HPCA}} (\bibinfo{year}{2022}).
\newblock


\bibitem[\protect\citeauthoryear{Wang, Li, Gu, et~al\mbox{.}}{Wang
  et~al\mbox{.}}{2022b}]%
        {wang2022onchipqnn}
\bibfield{author}{\bibinfo{person}{Hanrui Wang}, \bibinfo{person}{Zirui Li},
  \bibinfo{person}{Jiaqi Gu}, {et~al\mbox{.}}}
  \bibinfo{year}{2022}\natexlab{b}.
\newblock \showarticletitle{QOC: Quantum On-Chip Training with Parameter Shift
  and Gradient Pruning}.
\newblock \bibinfo{journal}{\emph{DAC}} (\bibinfo{year}{2022}).
\newblock


\bibitem[\protect\citeauthoryear{Wang, Zhang, and Han}{Wang
  et~al\mbox{.}}{2021b}]%
        {wang2021spatten}
\bibfield{author}{\bibinfo{person}{Hanrui Wang}, \bibinfo{person}{Zhekai
  Zhang}, {and} \bibinfo{person}{Song Han}.} \bibinfo{year}{2021}\natexlab{b}.
\newblock \showarticletitle{Spatten: Efficient sparse attention architecture
  with cascade token and head pruning}. In \bibinfo{booktitle}{\emph{HPCA}}.
  IEEE, \bibinfo{pages}{97--110}.
\newblock


\bibitem[\protect\citeauthoryear{Wang, Liang, Zhou, et~al\mbox{.}}{Wang
  et~al\mbox{.}}{2021a}]%
        {wang2021exploration}
\bibfield{author}{\bibinfo{person}{Zhepeng Wang}, \bibinfo{person}{Zhiding
  Liang}, \bibinfo{person}{Shanglin Zhou}, {et~al\mbox{.}}}
  \bibinfo{year}{2021}\natexlab{a}.
\newblock \showarticletitle{Exploration of Quantum Neural Architecture by
  Mixing Quantum Neuron Designs}. In \bibinfo{booktitle}{\emph{ICCAD}}. IEEE.
\newblock


\bibitem[\protect\citeauthoryear{Wille, Burgholzer, and Zulehner}{Wille
  et~al\mbox{.}}{2019}]%
        {wille2019mapping}
\bibfield{author}{\bibinfo{person}{Robert Wille}, \bibinfo{person}{Lukas
  Burgholzer}, {and} \bibinfo{person}{Alwin Zulehner}.}
  \bibinfo{year}{2019}\natexlab{}.
\newblock \showarticletitle{Mapping quantum circuits to IBM QX architectures
  using the minimal number of SWAP and H operations}. In
  \bibinfo{booktitle}{\emph{DAC}}. IEEE, \bibinfo{pages}{1--6}.
\newblock


\bibitem[\protect\citeauthoryear{Xiao, Rasul, and Vollgraf}{Xiao
  et~al\mbox{.}}{2017}]%
        {xiao2017fashion}
\bibfield{author}{\bibinfo{person}{Han Xiao}, \bibinfo{person}{Kashif Rasul},
  {and} \bibinfo{person}{Roland Vollgraf}.} \bibinfo{year}{2017}\natexlab{}.
\newblock \showarticletitle{Fashion-mnist: a novel image dataset for
  benchmarking machine learning algorithms}.
\newblock \bibinfo{journal}{\emph{arXiv 1708.07747}} (\bibinfo{year}{2017}).
\newblock


\end{thebibliography}
}

\end{document}